\documentclass[letterpaper]{article} 
\usepackage[]{arxiv}  
\usepackage{times}  
\usepackage{helvet}  
\usepackage{courier}  
\usepackage[hyphens]{url}  
\usepackage{graphicx} 
\usepackage{enumitem}
\usepackage{makecell}
\usepackage{natbib}  
\usepackage{caption} 
\usepackage{algorithm}
\usepackage{algpseudocode}
\usepackage{siunitx}
\usepackage{booktabs}
\usepackage{multicol}
\usepackage{multirow}
\usepackage{tikz, graphicx}
\usepackage{enumitem}
\usepackage{xcolor}           
\newif\ifdraft
\drafttrue                  
\ifdraft
  \newcommand{\sa}[1]{\textcolor{blue}{[#1 \textsc{}]}}
\else
  \newcommand{\sa}[1]{}     
\fi
\ifdraft
  
\else
\fi

\usepackage{etoolbox}        
\usepackage{caption}         
\definecolor{framegray}{gray}{0.1}

\newlength{\pcfigwidth}
\usepackage{subcaption}
\usepackage[export]{adjustbox}
\usepackage{tikz}
\usepackage{ifthen}

\usepackage{amsmath}
\usepackage{amssymb}
\usepackage{amsthm}

\usepackage{algorithm}
\usepackage{algpseudocode}
\usepackage{xcolor}

\theoremstyle{definition}
\newtheorem{definition}{Definition}
\newtheorem{lemma}{Lemma}
\newtheorem{theorem}{Theorem}
\newtheorem{proposition}{Proposition}
\newtheorem{corollary}{Corollary}

\usepackage{newfloat}
\usepackage{listings}

\newcommand{\Tr}{\mathrm{Tr}}

\newcommand{\x}{\mathbf{x}}
\newcommand{\Eh}{\widehat{\mathbb{E}}}
\newcommand{\ch}{\mathrm{ch}}
\newcommand{\pa}{\mathrm{pa}}
\newcommand{\Fn}{F_n}
\newcommand{\Fnc}{F_{nc}}
\newcommand{\Nnc}{N_{nc}}
\newcommand{\Snc}{S_{nc}}
\newcommand{\pr}{p_{r}}
\newcommand{\tn}{t_n}
\newcommand{\Tnn}{T_n}
\newcommand{\Th}{\widehat{T}_n}
\newcommand{\that}{\widehat{t}_n}
\newcommand{\rr}{\boldsymbol{\rho}}
\newcommand{\Hn}{H_n}
\newcommand{\Sset}{\mathcal{S}}
\newcommand{\PC}{\mathsf{PC}}
\newcommand{\R}{\mathbb{R}}

\definecolor{algcomment}{gray}{0.5}
\algrenewcommand{\algorithmiccomment}[1]{%
  \hfill\textcolor{algcomment}{\small// #1}} 

\DeclareCaptionStyle{ruled}{labelfont=normalfont,labelsep=colon,strut=off} 
\floatstyle{ruled}
\newfloat{listing}{tb}{lst}{}
\floatname{listing}{Listing}
\usepackage{amsfonts}       
\usepackage{nicefrac}       
\usepackage{microtype}      
\usepackage{lipsum}
\usepackage{fancyhdr}       
\usepackage{graphicx}       
\graphicspath{{media/}}     

\nocopyright 

\title{A Compositional Theory of Curvature in Probabilistic Circuits}
\author {Hrithik Suresh\equalcontrib\textsuperscript{\rm 1},
    Sahil Sidheekh\equalcontrib\textsuperscript{\rm 2,\rm 3}\thanks{Work done in part while Sahil was an intern at AT\&T.}, 
    Shelar Parth Vijay \textsuperscript{\rm 1},
    Yasir Z \textsuperscript{\rm 1},\\
    Sriraam Natarajan\textsuperscript{\rm 3},
    Narayanan Chatapuram Krishnan\textsuperscript{\rm 1}
    }
\affiliations{
    \textsuperscript{\rm 1} Mehta Family School of Data Science and Artificial Intelligence, Department of Data Science, Indian Institute of Technology Palakkad, Kerala, India\\
    \textsuperscript{\rm 2} AT\&T CDO \\
     \textsuperscript{\rm 3} Erik Jonsson School of Engineering \& Computer Science, The University of Texas at Dallas, Richardson, TX, USA

}

\usepackage{bibentry}

\begin{document}

\maketitle

\begin{abstract}
Probabilistic Circuits (PCs) are generative models that support exact inference and, unlike deep neural networks, admit an exact and tractable measure of loss-surface curvature: the trace of the Hessian of the log-likelihood. Recent work regularizes this trace globally to bias learning toward flatter, better generalizing optima. We show that treating sharpness as a global regularizer can be misspecified for PCs, whose curvature is inherently compositional. We prove that each sum node's contribution to the Hessian trace factorizes exactly into its circuit flow, which measures how heavily the node is used, and a local sharpness term determined by its output distribution. This decomposition provides insights into why global sharpness regularization is depth biased and can lead to underfitting. Building on it, we introduce an adaptive sharpness aware regularizer that penalizes nodes based on intrinsic local curvature and preserves closed form EM updates. We also show that empirically, this targeted regularization recovers the generalization that global regularization sacrifices while retaining the robustness and benefits of sharpness aware learning. 
\end{abstract}

\section{Introduction}
Probabilistic circuits (PCs) are generative models whose structural constraints make a broad class of probabilistic queries exactly tractable and computable in time linear in the circuit size~\cite{ChoiIMS2020}. This distinguishes PCs from mainstream deep generative models such as GANs~\cite{GoodfellowNeurIPS2014}, VAEs~\cite{KingmaICLR2014}, and normalizing flows~\cite{PapamakariosJMLR2021}, and makes them particularly useful in settings where exact and reliable inference is required, including constrained generation~\cite{zhang2023tractable}, image inpainting~\cite{liu2024image}, robust representation learning~\cite{braun2025tractable}, multimodal fusion~\cite{sidheekh2025credibilityaware}, neuro-symbolic AI~\cite{ahmed2022semantic,karanam2025unified} and algorithmic recourse~\cite{nemecekICLR2025,sabuTAI2026}, among others. As PCs become deeper and more expressive, however, they are increasingly prone to overfitting in low-data regimes and may converge to sharp optima that generalize poorly \cite{SureshAAAI2026}.

Sharpness-aware learning addresses this problem by biasing optimization toward flatter regions of the loss landscape~\cite{ForetICLR2021}. In deep neural networks, the relevant Hessian-based curvature is generally intractable and must be estimated through stochastic approximations~\cite{KaurPMLR2023,SankarAAAI2021}. Recent work ~\cite{SureshAAAI2026} has shown that PCs are a rare exception in this context: their structure makes the Hessian trace of the negative log-likelihood exactly computable in a single forward--backward pass, enabling a tractable global trace regularizer that improves generalization when data are scarce. While this result established that sharpness can be computed and penalized efficiently in PCs, it {\em treats the trace as a single global measure of flatness}.

In this work, we ask what this global curvature actually represents inside a PC, and whether it is always an appropriate learning signal. Our motivation comes from three empirical observations. First, in the high-data regime, global trace regularization reaches flatter optima but lowers both training and test log-likelihood (as illustrated in Figure~\ref{fig:landscape}), indicating under-fitting rather than improved generalization. Second, the node-wise contributions to the trace are highly concentrated, with a small fraction of sum nodes accounting for most of the total curvature. Third, restricting regularization to the largest-contributing nodes degrades performance further. These observations suggest that the issue is not only the \emph{overall strength} of regularization, but how it is \emph{allocated} across the circuit.
Our central thesis is that these quantities separate exactly, and that separating them helps resolve the puzzle. Specifically, we show that a sum node's contribution to the Hessian trace factorizes \emph{exactly} into two semantically distinct terms:
\begin{equation*}
\Tnn(\mathbf{x}) = \underbrace{\Fn(\mathbf{x})^2}_{\text{contextual usage}} \cdot \underbrace{\tn(\mathbf{x})}_{\text{local curvature}},
\end{equation*}
where $\Fn$ is the circuit flow through node $n$, measuring how strongly the node is used in explaining the input, while $\tn(\mathbf{x})$ is a local curvature measure that depends only on the node $n$. Thus, \textbf{sharpness in a PC is local in origin and contextual in its global effect}. A node may contribute strongly to the Hessian trace because it is locally sharp, because it lies on a high-flow path, or because both effects coincide. Consequently, ranking nodes by their global contribution need not identify the nodes with the largest intrinsic local curvature.

\begin{figure}[t]
    \centering
    \includegraphics[width=\linewidth]{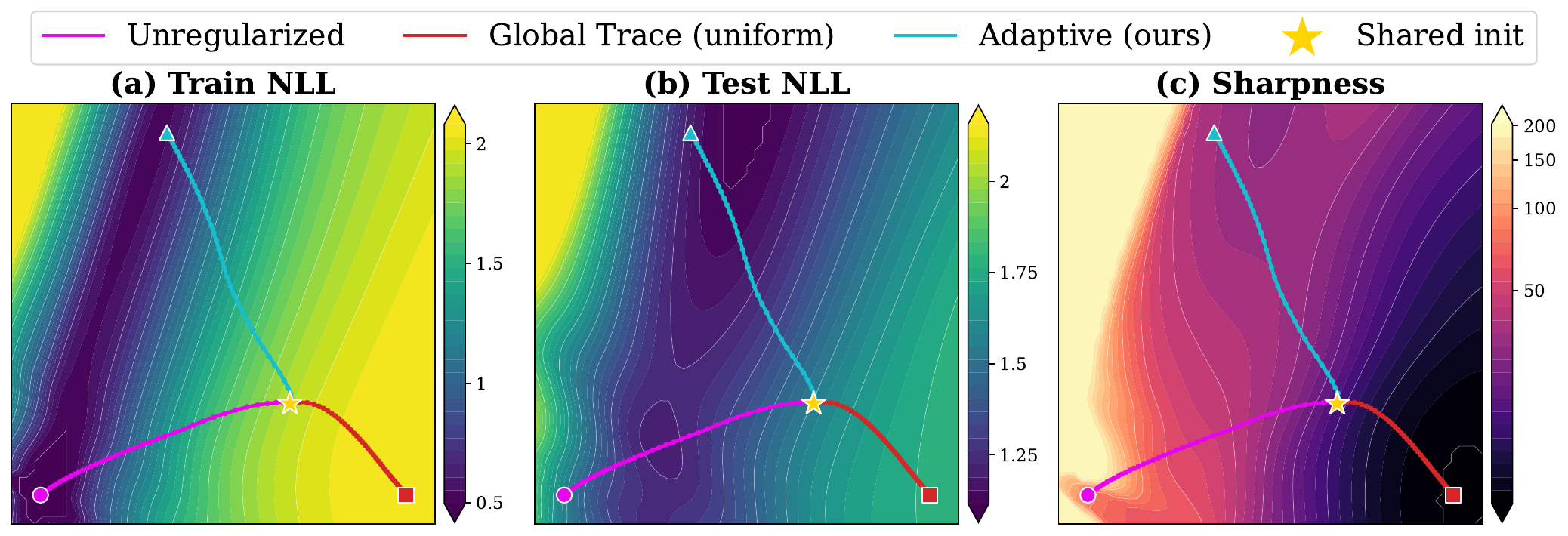}
    \caption{ \textbf{Global Flatness Can Underfit: Local Curvature Determines Where to Regularize.} Training trajectories of a PC on a 2D data distribution, from a shared initialization (star), shown over the (a) train NLL, (b) test NLL, and (c) sharpness surface. The unregularized model reaches a sharp optimum that generalizes poorly. The global trace regularizer's uniform penalty helps achieve flattest region, but at the expense of under-fitting (poor train and test NLL), whereas our adaptive penalty targets nodes of high \emph{local} curvature $\tn$ and generalizes best, while also reducing sharpness. Both approaches use the same regularization strength $\mu$.}
    \label{fig:landscape}
\end{figure}

We first characterize and study both factors theoretically. 
For the local term, we show that the Hessian of a sum node is rank one and that its unique nonzero eigenvalue is exactly $\tn$. Consequently, $\tn$ fully captures the node's ambient second-order geometry. 
For the contextual term, we build on the standard circuit-flow recursion to analyze how structural usage shapes global curvature, and show that it causes a depth bias in the global trace toward shallow, high-flow nodes. We formalize this and derive an exact condition under which rankings by global contribution $\Tnn$ and local curvature $\tn$ disagree, explaining why selecting nodes by their global trace contribution can preferentially target highly used circuit components rather than intrinsically sharp mixtures.
Finally, we show how this decomposition can be algorithmically useful. Since $\Tnn$ conflates local curvature with structural usage, it is not a reliable quantity for deciding where regularization should act. We propose an adaptive sharpness regularizer use the empirical local curvature $\widehat{t}_n$ to gate the existing trace penalty, assigning stronger regularization to intrinsically sharper mixture nodes while preserving the tractable updates and linear-time complexity of the global method. The resulting learner retains the low-data gains of sharpness-aware training while reducing the under-fitting induced by uniform regularization at higher data scales.




\section{Background and Preliminaries}
\label{sec:background}

\subsection{Probabilistic Circuits}
\label{sec:pc}

A probabilistic circuit (PC) over random variables $\mathbf{X}=\{X_1,\dots,X_d\}$ is a rooted directed acyclic graph in which each node $n$ computes a distribution $p_n$, defined recursively: 
\begin{equation} p_n(\mathbf{x}) = \begin{cases} 
f_n(\mathbf{x}), & n \text{ an input node,} 
\\ \prod_{c\in \ch(n)} p_{c}(\mathbf{x}), & n \text{ a product node,} 
\\ \sum_{c\in \ch(n)} \theta_{nc}\, p_c(\mathbf{x}), & n \text{ a sum node,} \end{cases} 
\label{eq:pc}
\end{equation} 
where $f_n$ is a univariate input distribution, $\ch(n)$ are the children of $n$, and the sum-node weights satisfy $\theta_{nc}\ge 0$ and $\sum_{c\in\ch(n)}\theta_{nc}=1$, so that the likelihood of a sample is simply the value computed at the root, $p(\mathbf{x})=\pr(\mathbf{x})$. 
Product nodes represent factorizations over disjoint scopes (decomposability), while sum nodes represent convex mixtures over children with identical scope (smoothness). Together, these two structural conditions are what make a broad class of probabilistic queries tractable, computable exactly and in time linear in circuit size~\cite{ChoiIMS2020}. This formalism is general enough to subsume arithmetic circuits~\cite{darwiche2003differential-ac}, sum-product networks~\cite{PoonICCV2011}, PSDDs~\cite{KisaKR2014}, and cutset networks~\cite{rahman2014cutset} as special cases. Throughout, we write $\Sset$ for the set of sum nodes and collect their weights into $\boldsymbol{\theta}=\{\theta_{nc}\}$. Unless stated otherwise, we treat these sum-edge weights as untied free parameters, and measure curvature w.r.t this representation.

\textbf{Circuit Flows and EM Learning} 
\label{sec:flows}.
While the upward evaluation of a PC computes the likelihood at the root, analyzing learning and curvature requires attributing this computation to individual nodes and edges. Circuit flows \cite{LiuNeurIPS2021} provide this downward attribution by measuring
how much probability mass the root node routes through a given node when explaining a particular sample, and it is this quantity that will allow us later to localize curvature to specific parts of the circuit. Flow propagates from the root downward, and at each node its value is determined by the type of the node's parents. 
Setting $F_r(\mathbf{x})=1$ at the root, the flow at node $n$ is 
\begin{equation*} \Fn(\mathbf{x}) = \sum_{\substack{m\in\pa(n)\\ m \text{ is product}}} F_m(\mathbf{x}) \ + \sum_{\substack{m\in\pa(n)\\ m \text{ is sum}}} F_m(\mathbf{x})\,\theta_{mn}\, \frac{p_n(\mathbf{x})}{p_m(\mathbf{x})}, \label{eq:flowrec} 
\end{equation*}
meaning a product parent passes its flow to each child undivided, whereas a sum parent attenuates the flow it passes to child $n$ by the routing responsibility $\theta_{mn}\,p_n(\mathbf{x})/p_m(\mathbf{x})$. 
Equivalently, $\Fn(\mathbf{x}) = \partial \log \pr(\mathbf{x})/\partial \log p_n(\mathbf{x})$: the flow is the sensitivity of the root log-likelihood to the node's log-output. 
Flow at an edge $(n,c)$ leaving a sum node $n$ is 
\begin{equation} 
\Fnc(\mathbf{x}) = \theta_{nc}\,\frac{p_c(\mathbf{x})}{p_n(\mathbf{x})}\,\Fn(\mathbf{x}), 
\label{eq:edgeflow} 
\end{equation} 
and from this the gradient of the log-likelihood with respect to the corresponding sum weight follows directly, $\partial \log \pr(\mathbf{x})/\partial\theta_{nc}=\Fnc(\mathbf{x})/\theta_{nc}$. 
We also write $r_{nc}(\mathbf{x})=\theta_{nc}\,p_c(\mathbf{x})/p_n(\mathbf{x})$ for the posterior routing responsibility of child $c$, which satisfies $\sum_{c\in\ch(n)} r_{nc}(\mathbf{x})=1$ and so behaves as a proper distribution over $n$'s children. Node and edge flows are obtained together from a single forward--backward pass over the circuit, and this same pass suffices to compute gradients and closed-form expectation-maximization (EM) updates in linear time~\cite{LiuNeurIPS2021}. In the EM view, the edge flow $\Fnc(\mathbf{x})$ is the expected number of times sample $\mathbf{x}$ traverses edge $(n,c)$, and the M-step simply aggregates these expectations across a dataset. 

\subsection{Sharpness and Regularization in PCs}
\label{sec:trace}

Recent works have pushed PCs toward deeper, more expressive, and more scalable architectures while preserving tractable inference~\cite{PeharzUAI2019,PeharzICML2020,SidheekhPMLR2023,sidheekh2024building,LiuICML2024,sidheekh2026geometryaware}. This increased capacity, however, also makes them more susceptible to overfitting, especially in limited-data regimes. 
Existing approaches have attempted to exploit the circuit's tractable structure to address this issue via dropout \cite{ventola2023probabilistic}, parameter smoothing, data softening, and entropy regularization~\cite{LiuNeurIPS2021}. 
A complementary view studies overfitting through the geometry of the loss landscape. Flat minima have long been associated with better generalization~\cite{hochreiter1997flat,KeskarICLR2017}, motivating methods such as sharpness-aware minimization~\cite{ForetICLR2021}. This relationship is however not universal: sharpness can depend on the chosen parameterization, and its connection to generalization varies across models and tasks~\cite{dinh2017sharp,andriushchenko23a}. More recently, \cite{SureshAAAI2026} brought this perspective to PCs, showing that overfitting can coincide with convergence to sharp optima and that the Hessian trace of the negative log-likelihood can be computed exactly for general PCs.
They showed that as $\pr$ is multilinear in each sum-edge weight, the diagonal of the Hessian of the log-likelihood coincides with the squared first derivative along that weight, so the trace of the Hessian reduces to a sum of squared edge gradients. 
Writing the per-example negative log-likelihood (NLL) as $\ell(\boldsymbol{\theta};\mathbf{x})=-\log \pr(\mathbf{x})$, the trace  \begin{equation} \Tr\!\left(\nabla^2_{\boldsymbol{\theta}}\, \ell(\boldsymbol{\theta};\mathbf{x})\right) = \sum_{n\in\Sset}\sum_{c\in\ch(n)}\left(\frac{\Fnc(\mathbf{x})}{\theta_{nc}}\right)^2 \;\ge\; 0. \label{eq:trace} 
\end{equation} 
quantifies the \textbf{sharpness} of the loss landscape at $\boldsymbol{\theta}$, i.e. curvature information, and is computable exactly and in time linear in circuit size. Overfitting in PCs is typically correlated with convergence to sharp optima, and incorporating this trace as a constraint in the EM M-step yields a closed-form, trace-regularized update for each sum weight, 
\begin{equation} 
\theta_{nc} = \frac{\Nnc + \sqrt{\Nnc^2 + 4\lambda\mu\,\Nnc}}{2\lambda}, 
\label{eq:emupdate}
\end{equation} 
where $\Nnc=\sum_i \Fnc(\mathbf{x}_i)$ is the batch-aggregated expected count and $\lambda,\mu\ge0$ are the Lagrange multipliers enforcing, respectively, the simplex constraint and the regularization strength. This enables convergence to flatter optima that can generalize better. However, in this framework a single strength $\mu$ is applied uniformly to every sum node, we refer to this method as the \emph{global trace regularizer}, and we take it as our baseline and point of departure.
To reason about \emph{where} in the circuit this trace originates, rather than only its aggregate value, we attribute it to individual sum nodes.
The sample-wise \emph{global trace contribution} of a sum node $n$ collects the terms of Eq.~\eqref{eq:trace} belonging to its outgoing edges, 
\begin{equation}
\Tnn(\mathbf{x}) = \sum_{c\in\ch(n)}\left(\frac{\Fnc(\mathbf{x})}{\theta_{nc}}\right)^2,  \  \Th = \frac{1}{N}\sum_{i=1}^{N}\Tnn(\mathbf{x}_i) 
\label{eq:Tn} 
\end{equation}
so that $\sum_{n\in\Sset}\Tnn(\mathbf{x})$ recovers the trace in Eq.~\eqref{eq:trace} exactly, and $\Th$ is its empirical average at node $n$. The collection $\{\Th\}$ thus gives a node-level view of where the model's sharpness actually resides, we analyze this in the next section.

\section{When Global Sharpness Fails}
\label{sec:hook}

The global trace regularizer was motivated by the observation that reducing sharpness improves generalization  by enabling convergence to flatter optima~\cite{SureshAAAI2026}. However, as we show in this section, treating sharpness as a single global measure obscures how curvature is distributed across the circuit and which components give rise to it. We identify three empirical observations
that motivate a more structured view of sharpness and its regularization.

\begin{table}[t]
\centering
\caption{Mean percentage change in test NLL ($\Delta$NLL), reduction in degree of overfitting ($\Delta$DoF), and decrease  in the loss-surface sharpness ($\Delta$Sharp) across 20 DEBD binary datasets, achieved by global-trace regularized HCLT, compared to non-regularized baseline, averaged over 5 runs.}
\label{tab:mean_improvements}
\small
\setlength{\tabcolsep}{6pt}
\renewcommand{\arraystretch}{1.1}
\begin{tabular}{@{}crrr@{}}
\toprule
\textbf{Data \%} & \textbf{$\Delta$Test-NLL} & \textbf{$\Delta$DoF} & \textbf{$\Delta$Sharp} \\
\midrule
$1\%$   & $7.12$ & $8.81$ & $23.04$ \\
$5\%$   & $2.19$ & $5.79$ & $15.32$ \\
$10\%$  & $0.27$ & $2.73$ & $10.42$ \\
$50\%$  & $-0.48$ & $0.21$ & $5.55$ \\
$100\%$ & $-0.31$ & $0.06$ & $4.04$ \\
\bottomrule
\end{tabular}
\end{table}


\begin{figure}[ht]
    \centering
    \includegraphics[width=\linewidth]{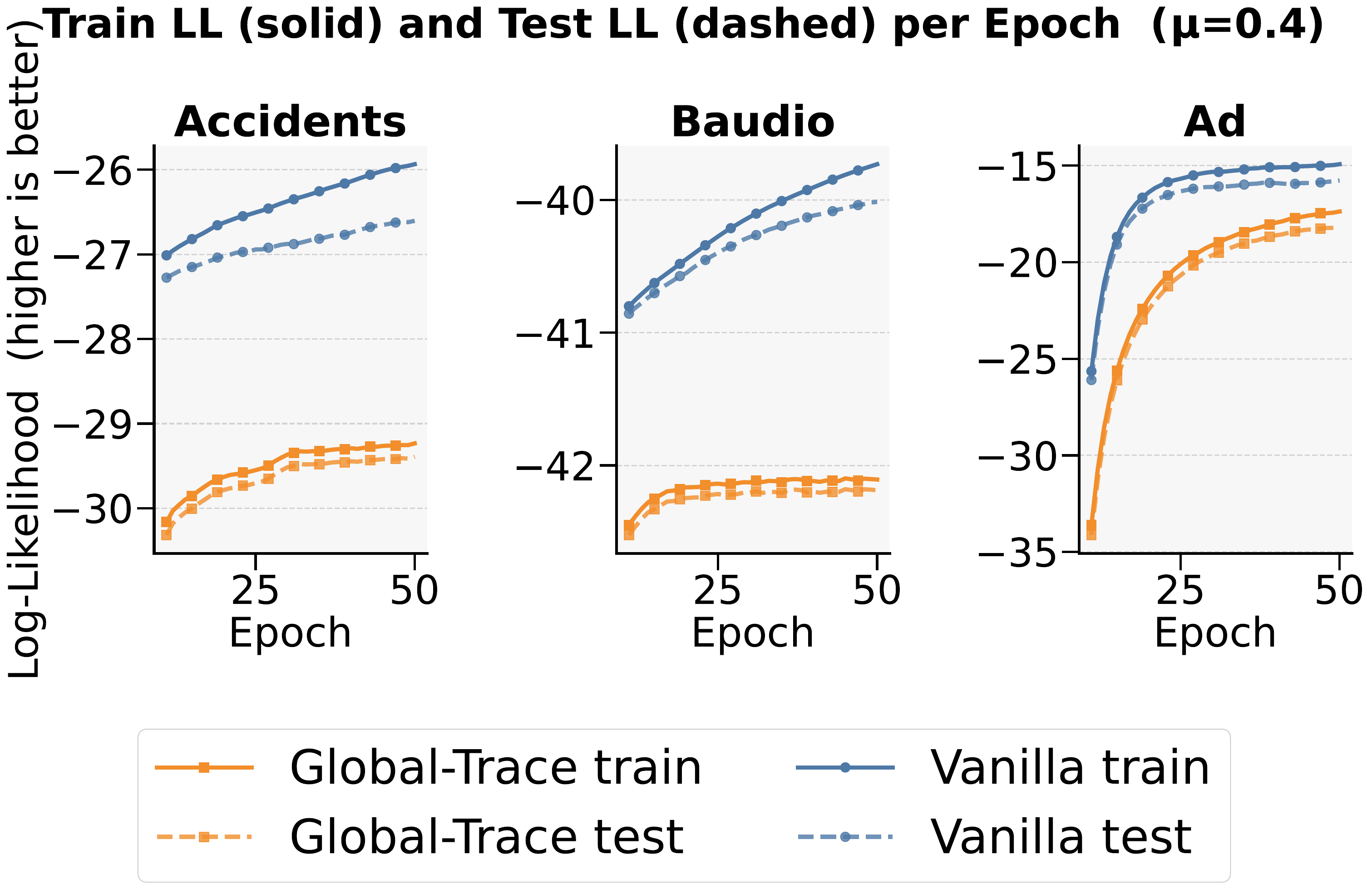}
    \caption{
    Training and test log-likelihood trajectories for the vanilla and global trace-regularized models on \texttt{accidents}, \texttt{baudio}, and \texttt{ad} using the full training data. Global trace regularization reduces the train-test gap but lowers both training and test likelihood, indicating underfitting rather than improved generalization.
    }
    \label{fig:underfitting}
\end{figure}

\paragraph{Observation 1: A flatter optimum need not be a better one.}
Table \ref{tab:mean_improvements} shows that the global trace regularizer helps reach flatter optima but can nevertheless attain lower test log-likelihood than the unregularized model when sufficient data are available. This degradation is not explained by a widening train-test gap, as the corresponding train log-likelihood also decreases, as shown in Figure~\ref{fig:underfitting}. It therefore drives learning toward solutions that are flatter yet fit both the training and test distributions less well, indicating underfitting. Reducing overall curvature is thus not sufficient, understanding how that curvature is important to preserve model capacity.


\begin{figure}[ht]
\centering
\includegraphics[width=\linewidth]{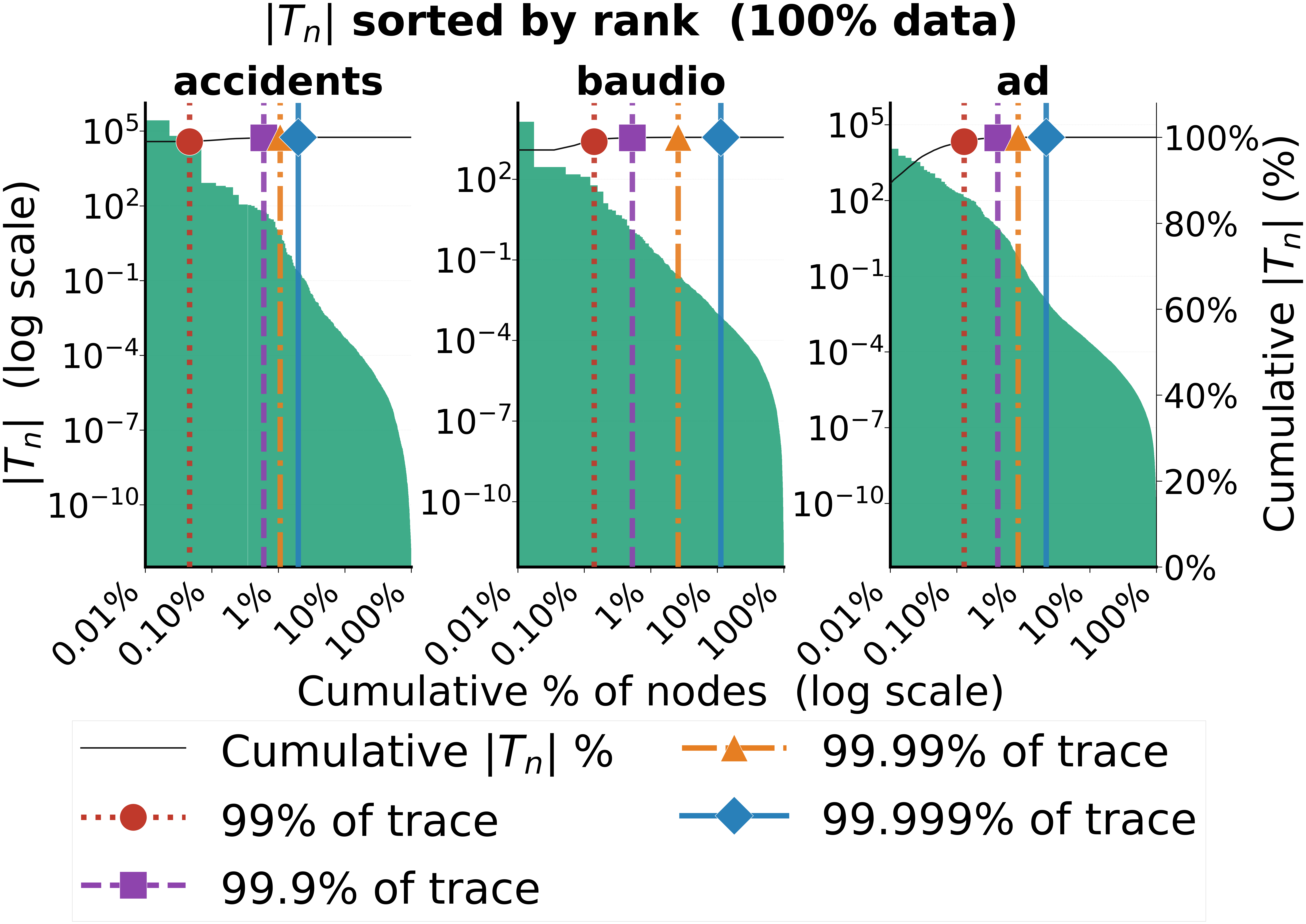}
\caption{\textbf{Only a small fraction of sum nodes accounts for most of the global Hessian trace.}
Cumulative share of the total trace as sum nodes are ranked by their empirical contribution $(\Th)$. Across datasets, the distribution is highly concentrated: a small set of nodes dominates the global sharpness signal, while most nodes contribute negligibly.}
\label{fig:global-contentration}
\end{figure}

\paragraph{Observation 2: Global sharpness is concentrated in a small set of nodes.}
Examining how the global Hessian trace is distributed across the circuit reveals that the node-wise contributions $(\Th)$ defined in Eq~\eqref{eq:Tn} are highly nonuniform. As shown in Figure~\ref{fig:global-contentration}, a small minority $(<10\%)$ of sum nodes accounts for nearly the entire trace $(99.99\%)$, while the contributions of other nodes are negligible. Global sharpness is therefore not spread evenly across the model, but concentrated in a small set of circuit components. This also reveals {\bf a limitation of using a single global regularization strength}. The regularizer penalizes all sum nodes equally despite their markedly different contributions to the total curvature, and cannot distinguish the few nodes that dominate the sharpness signal from the many nodes that are already effectively flat. 

\begin{figure}
\centering
\includegraphics[width=0.9\linewidth]{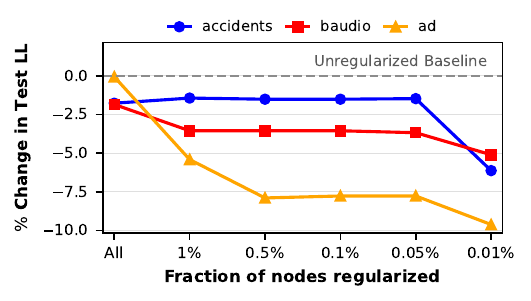}
\caption{\textbf{Targeting the largest global trace contributors worsens underfitting.}
Change in test log-likelihood relative to the unregularized model when trace regularization is applied to all sum nodes or restricted to progressively smaller fractions ranked by $\widehat{T}_n$. Selecting fewer top-ranked nodes generally worsens performance, showing that global contribution is a poor criterion for allocating regularization.
}
\label{fig:Gtrace_thresh}
\end{figure}

\paragraph{Observation 3: Targeting the largest contributors makes underfitting worse.}
The concentration observed above suggests a natural solution: restrict regularization to the nodes with the largest contributions to the global trace. If underfitting were caused primarily by applying the penalty to many negligible contributors that are already sufficiently flat, this strategy could preserve capacity while retaining curvature control. 
However, as shown in Figure~\ref{fig:Gtrace_thresh}, the test log-likelihood degrades further as the penalty is restricted to progressively fewer top-$(\Th)$ nodes. Thus, although $(\Th)$ measures a node's contribution to global sharpness, it does not identify where regularization should act.
Taken together, these observations suggest that neither greater flattening nor restricting regularization to fewer high-contribution nodes is sufficient to resolve the observed degradation. They instead raise a more fundamental question: what does a node’s global trace contribution actually measure, and why can it be a poor signal for allocating regularization? 

\section{A Compositional Theory of Sharpness}
\label{sec:theory}

To address the above puzzle, we first show theoretically that a node's global trace contribution factorizes exactly into a structural factor and a local curvature factor. We characterize them geometrically, and derive the conditions under which the two induce different node rankings. Together, these results formalize the thesis that curvature in a PC is generated locally and expressed globally through circuit flow.

\subsection{Exact Global-Local Decomposition}
\label{sec:decomp}

\begin{figure*}[!ht]
    \centering
    \begin{subfigure}{0.245\linewidth}
        \includegraphics[width=\linewidth]{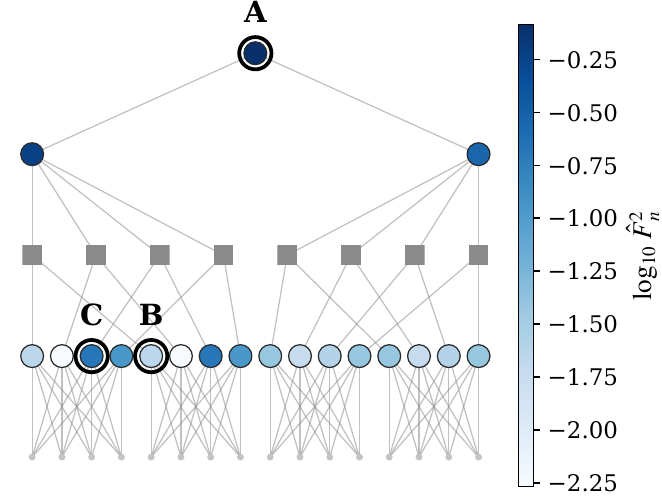}
        \caption{Contextual Usage $(F_n)$}
        \label{fig:decomp-a}
    \end{subfigure}
    \hfill
    \begin{subfigure}{0.245\linewidth}
        \includegraphics[width=\linewidth]{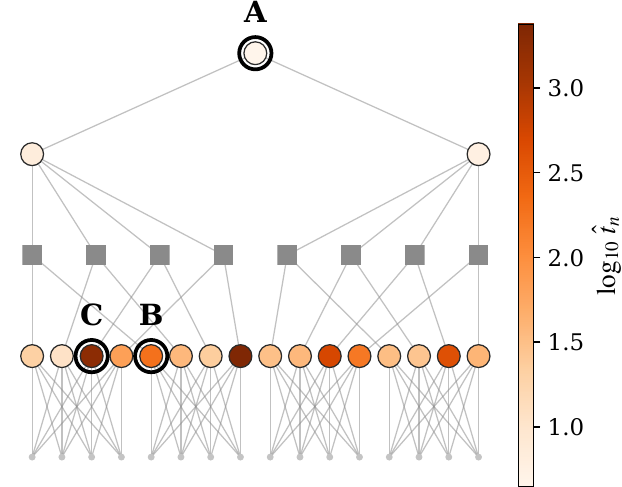}
        \caption{Intrinsic local curvature $(t_n)$}
        \label{fig:decomp-b}
    \end{subfigure}
    \hfill
    \begin{subfigure}{0.245\linewidth}
        \includegraphics[width=\linewidth]{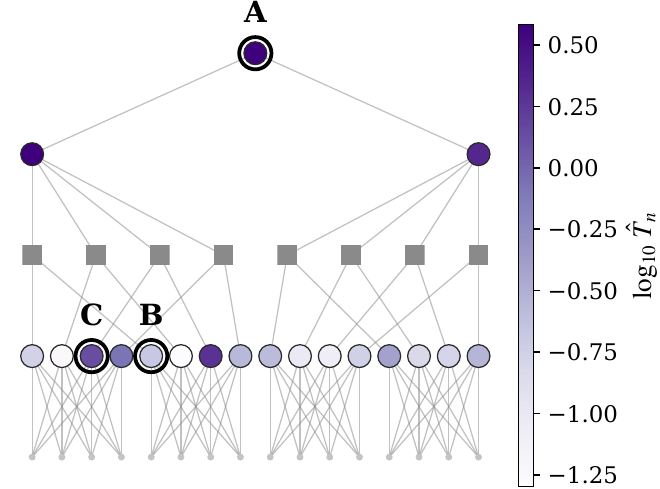}
        \caption{Global trace contribution $(T_n)$}
        \label{fig:decomp-c}
    \end{subfigure}
    \hfill
    \begin{subfigure}{0.245\linewidth}
        \includegraphics[width=\linewidth]{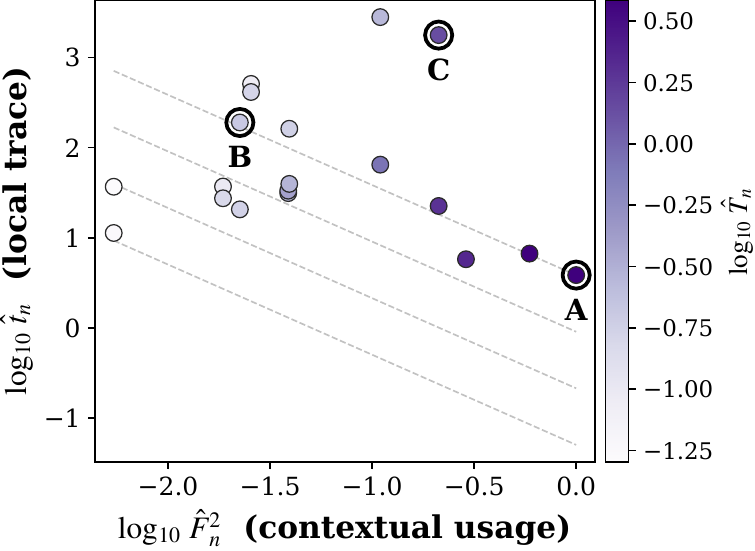}
        \caption{Flow-curvature interaction}
        \label{fig:decomp-d}
    \end{subfigure}

   \caption{\textbf{Compositional structure of sharpness in a trained probabilistic circuit.} For each input, a sum node's global trace contribution factorizes as $\Tnn=\Fn^2\tn$, separating local curvature from its amplification through circuit flow. Panels~(a)--(c) show the same circuit nodes colored by empirical contextual usage, local curvature, and global contribution, while panel~(d) visualizes their interaction, showing that nodes with the large global curvature contribution need not have the large local curvature.}
    \label{fig:curvature-decomposition}
\end{figure*}

\begin{theorem}[]\label{thm:decomp}
Consider any sum node \(n\) with strictly positive outgoing weights in a smooth, decomposable PC. For any input \(\mathbf{x}\), its global trace contribution $\Tnn(\mathbf{x})$
factorizes as
$
\Tnn(\mathbf{x})
=
\Fn(\mathbf{x})^2\,\tn(\mathbf{x}),
$
$
\tn(\mathbf{x})
=
\sum_{c\in\ch(n)}
(p_c(\mathbf{x})/p_n(\mathbf{x}))^2
\label{eq:decomp}
$
Consequently,
$
\Tr\left(
\nabla_{\boldsymbol{\theta}}^2
\ell(\boldsymbol{\theta};\mathbf{x})
\right)
=
\sum_{n\in\Sset}
\Fn(\mathbf{x})^2\,\tn(\mathbf{x}).
$
\end{theorem}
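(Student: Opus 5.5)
The plan is to substitute the edge-flow identity, Eq.~\eqref{eq:edgeflow}, directly into the definition of the node contribution, Eq.~\eqref{eq:Tn}. Since every outgoing weight satisfies $\theta_{nc}>0$, we can divide by $\theta_{nc}$ and obtain
\[
\frac{\Fnc(\mathbf{x})}{\theta_{nc}} = \frac{p_c(\mathbf{x})}{p_n(\mathbf{x})}\,\Fn(\mathbf{x}).
\]
This is exactly the gradient identity $\partial \log \pr(\mathbf{x})/\partial\theta_{nc}=\Fnc(\mathbf{x})/\theta_{nc}$ recalled in Section~\ref{sec:flows}. Squaring it and summing over $c\in\ch(n)$ lets us pull the factor $\Fn(\mathbf{x})^2$, which does not depend on $c$, out of the sum. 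What remains is $\sum_{c}(p_c(\mathbf{x})/p_n(\mathbf{x}))^2=\tn(\mathbf{x})$.

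The second claim then follows by summing over $n\in\Sset$. By Eq.~\eqref{eq:trace}, the Hessian trace equals $\sum_{n\in\Sset}\Tnn(\mathbf{x})$, so the node-wise factorization gives the stated formula term by term.

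Most of the care goes into justifying the edge-flow identity, rather than merely citing it. I would verify it through the characterization $\Fn=\partial\log\pr/\partial\log p_n$. In a smooth, decomposable PC, $\pr$ is multilinear in the sum weights. Because $\theta_{nc}$ enters the circuit only through $p_n$, the chain rule gives
\[
\frac{\partial \log \pr}{\partial \theta_{nc}}
= \frac{\partial \log \pr}{\partial \log p_n}\cdot\frac{\partial \log p_n}{\partial\theta_{nc}}
= \Fn\,\frac{p_c}{p_n}.
\]
Here we use that $p_c$ does not depend on $\theta_{nc}$, since the graph is acyclic and $c$ is a descendant of $n$. One subtlety to check is that the partial derivative is taken with $\theta_{nc}$ treated as a free, untied parameter, as stipulated in Section~\ref{sec:pc}. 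A second is that $p_n(\mathbf{x})>0$, so the ratios are well defined; this holds whenever $p_n(\mathbf{x})>0$, which the strictly positive weights help ensure whenever some child has positive density.

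The remaining step, the trace identity in Eq.~\eqref{eq:trace} (Hessian diagonal equals squared first derivatives because of multilinearity), is taken from prior work as stated. I would also note in passing that the decomposition inherits the untied-parameterization assumption through that identity. I do not expect a genuine obstacle here; the main point of care is the positivity and well-definedness conditions.
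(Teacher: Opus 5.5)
Your proposal is correct and follows the paper's proof: substitute $\Fnc(\mathbf{x})/\theta_{nc}=(p_c(\mathbf{x})/p_n(\mathbf{x}))\,\Fn(\mathbf{x})$ into the definition of $\Tnn$, pull $\Fn(\mathbf{x})^2$ out of the sum over children, then sum over $n\in\Sset$ using the trace identity from prior work. Your chain-rule check of the edge-flow identity via $\Fn=\partial\log p_r/\partial\log p_n$ (untied parameters, and $p_c$ independent of $\theta_{nc}$ by acyclicity) is extra rigor the paper omits, since it takes that identity directly from the preliminaries.
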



The decomposition is exact and follows directly from the edge-flow factorization. Moreover, both factors are available from the same upward-downward computation used to evaluate circuit flows, so computing the decomposition adds no asymptotic cost. 
The two factors, however, have distinct interpretations. The term
\(\Fn(\mathbf{x})^2\) measures the node's \emph{contextual usage}: how strongly the circuit output depends on node \(n\) for input \(\mathbf{x}\). The term
\(\tn(\mathbf{x})\), as we will show, is the node's \emph{local trace}, determined entirely by the outputs of its local mixture. Hence, a large global contribution \(\Tnn\) may arise from large local curvature, large contextual usage, or both. A node that is frequently used can dominate the global trace even when its local mixture is not among the sharpest, while a locally sharp node can contribute little globally if little flow reaches it.
Under uniform penalization, this can suppress high-usage components regardless of their local geometry, leading regularization to act where curvature is globally amplified rather than where it is intrinsically largest.

Figure~\ref{fig:curvature-decomposition} makes the distinction concrete. The nodes that dominate
the global trace in panel~(c) align more closely with high contextual usage in panel~(a) than with
the largest local traces in panel~(b). This mismatch motivates a closer examination of the two
factors: we first characterize the local geometry summarized by $\tn$, and then study how $\Fn$
transports that geometry through the circuit.

\subsection{Rank-One Local Geometry}
\label{sec:rankone}

The local factor $\tn$ is more than a convenient scalar summary, and as we show below, it captures the entire local
second-order geometry of a sum node. Consider a sum node $n$ in isolation, with local negative
log-output
$\ell_n(\boldsymbol{\theta}_n;\mathbf{x})=-\log\big(\sum_{c}\theta_{nc}p_c(\mathbf{x})\big)$,
and write $\rho_{nc}(\mathbf{x})=p_c(\mathbf{x})/p_n(\mathbf{x})$ for the output ratios, collected
into the vector $\rr_n$.

\begin{proposition}[Rank-one local Hessian]\label{prop:rankone}
The gradient and Hessian of $\ell_n$ in the edge weights $\boldsymbol{\theta}_n$ are
$\nabla_{\boldsymbol{\theta}_n}\ell_n = -\rr_n$ and
$\Hn(\mathbf{x}) = \nabla^2_{\boldsymbol{\theta}_n}\ell_n = \rr_n\rr_n^\top$. Hence, $\Hn$ is
positive semidefinite and, whenever $\rr_n\neq 0$, has rank one, with unique nonzero eigenvalue
$\lambda_{\max}(\Hn)=\lVert\rr_n\rVert_2^2=\tn$. Consequently
$\tn = \Tr(\Hn) = \lVert\Hn\rVert_2 = \lVert\Hn\rVert_F$.
\end{proposition}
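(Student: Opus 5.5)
The proof is a direct computation on the local objective, treating the child outputs $p_c(\mathbf{x})$ as constants in $\boldsymbol{\theta}_n$. This is legitimate because we consider the sum node $n$ in isolation, and its children's outputs depend only on weights below $n$, not on its own edge weights. Write $p_n(\mathbf{x})=\sum_c \theta_{nc}p_c(\mathbf{x})$. Then $p_n$ is linear in $\boldsymbol{\theta}_n$, with $\partial p_n/\partial\theta_{nc}=p_c$ and all second derivatives of $p_n$ equal to zero. We treat $\boldsymbol{\theta}_n$ as unconstrained coordinates in the ambient space $\R^{|\ch(n)|}$; the simplex constraint is not imposed when differentiating. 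This matches the paper's convention of measuring curvature with respect to untied free edge weights.

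\textbf{Gradient and Hessian.} By the chain rule, $\partial \ell_n/\partial\theta_{nc} = -p_c/p_n = -\rho_{nc}$, so $\nabla\ell_n=-\rr_n$. Differentiating again, and using that $p_c$ does not depend on $\boldsymbol{\theta}_n$,
\[
\partial^2\ell_n/\partial\theta_{nc}\partial\theta_{nc'} = p_c\,p_{c'}/p_n^2 = \rho_{nc}\rho_{nc'},
\]
so $\Hn=\rr_n\rr_n^\top$. Well-definedness needs $p_n(\mathbf{x})>0$, which I assume as in Theorem~\ref{thm:decomp}, where the ratios $p_c/p_n$ already appear.

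\textbf{Spectral claims.} For any $v$, $v^\top\Hn v=(\rr_n^\top v)^2\ge 0$, so $\Hn$ is positive semidefinite. If $\rr_n\neq 0$, the image of $\Hn$ is $\mathrm{span}(\rr_n)$, so the rank is one. Moreover $\Hn\rr_n=\lVert\rr_n\rVert_2^2\,\rr_n$, while every vector orthogonal to $\rr_n$ lies in the kernel. Hence the unique nonzero eigenvalue is $\lVert\rr_n\rVert_2^2=\sum_c \rho_{nc}^2$, which is exactly $\tn$ from Theorem~\ref{thm:decomp}.

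\textbf{Norm identities.} For a positive semidefinite matrix whose only nonzero eigenvalue is $\lambda$:
\begin{itemize}
\item the trace is the sum of eigenvalues, which is $\lambda$;
\item the spectral norm is the largest absolute eigenvalue, which is $\lambda$;
\item the Frobenius norm is $\sqrt{\sum_i\lambda_i^2}=\lambda$.
\end{itemize}
Alternatively, $\lVert \rr\rr^\top\rVert_F^2=\sum_{c,c'}\rho_c^2\rho_{c'}^2=\lVert\rr\rVert^4$ gives the Frobenius identity directly. Together these give $\tn=\Tr(\Hn)=\lVert\Hn\rVert_2=\lVert\Hn\rVert_F$.

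The only point needing care is stating the differentiation convention: the children are fixed with respect to $\boldsymbol{\theta}_n$, and the differentiation is ambient rather than on the simplex. Everything else is routine linear algebra.
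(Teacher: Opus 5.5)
Your proposal is correct and follows the same route as the paper: you differentiate $\ell_n$ directly, treating the $p_c(\mathbf{x})$ as constants in $\boldsymbol{\theta}_n$, to obtain $-\rr_n$ and $\rr_n\rr_n^\top$, and then use the standard rank-one positive semidefinite argument that the trace, spectral norm, and Frobenius norm all equal $\lVert\rr_n\rVert_2^2=\tn$. Your explicit remarks on the ambient (non-simplex) differentiation convention and on the need for $p_n(\mathbf{x})>0$ are sensible additions that the paper leaves implicit.
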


Thus, the local second-order geometry of a sum node collapses to a single scalar, the quantity $\tn$, which is
simultaneously the local hessian trace, the maximum ambient curvature, and the total Hessian magnitude.

\subsection{Context Propagation and the Depth Bias}
\label{sec:propagation}

The contextual factor $\Fn^2$ is what makes $\Tnn$ and $\tn$ differ, and it induces a systematic \emph{depth bias} in the global trace: because flow is partitioned among children at every sum node on the way down from the root, shallow nodes accumulate more of it than deep ones, so the global contribution $\Tnn=\Fn^2\tn$ favors structurally shallow nodes regardless of their intrinsic curvature. To understand this bias better, recall a standard consequence of the circuit-flow recursion: product nodes transmit flow unchanged, whereas sum nodes scale it by posterior routing responsibilities. Unrolling the flow recursion thus shows that flow is in fact attenuated by the specific edges it traverses, rather than depth as such.

\begin{lemma}
\label{lemma:propagation}
For a tree-structured PC, the node flow is the product of routing responsibilities along the
unique root-to-$n$ path $\pi(r,n)$, taken over its \emph{sum} edges ($E_{sum}$) only:
$\Fn(\mathbf{x}) = \prod_{e\in\pi(r,n)\cap E_{\mathrm{sum}}} r_e(\mathbf{x})$, with
$r_{(m,n')}=\theta_{mn'}\,p_{n'}/p_m$. For a DAG,
$\Fn(\mathbf{x})=\sum_{\pi\in\Pi(r,n)}\prod_{e\in\pi\cap E_{\mathrm{sum}}} r_e(\mathbf{x})$,
summing over all root-to-$n$ paths $\Pi(r,n)$.
\end{lemma}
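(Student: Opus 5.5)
The plan is to prove the DAG statement directly by induction over a topological order of the circuit. The tree case then follows as a special case, since in a tree $\Pi(r,n)$ consists of the single path $\pi(r,n)$. Write $G_n(\mathbf{x}) = \sum_{\pi\in\Pi(r,n)}\prod_{e\in\pi\cap E_{\mathrm{sum}}} r_e(\mathbf{x})$. We will show that $\Fn(\mathbf{x}) = G_n(\mathbf{x})$ for every node $n$, processing nodes from the root downward so that every parent is handled before its children.

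For the base case, the root has exactly one root-to-root path, the empty one. The empty product equals $1$, so $G_r = 1 = F_r$ by the definition of circuit flow.

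For the inductive step, take a non-root node $n$ and assume $F_m = G_m$ for every $m\in\pa(n)$, which is valid because parents precede $n$ in the topological order.
\begin{itemize}
\item Every root-to-$n$ path ends with a unique last edge $(m,n)$ for some $m\in\pa(n)$. Hence $\Pi(r,n)$ is the disjoint union over $m\in\pa(n)$ of the sets $\{\pi'\circ(m,n) : \pi'\in\Pi(r,m)\}$.
\item If $m$ is a product node, the edge $(m,n)$ is not in $E_{\mathrm{sum}}$ and contributes no factor. The contribution of these paths is therefore $G_m = F_m$.
\item If $m$ is a sum node, every path through $(m,n)$ picks up the extra factor $r_{(m,n)} = \theta_{mn}p_n/p_m$. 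The contribution is therefore $G_m\,\theta_{mn}p_n/p_m = F_m\,\theta_{mn}p_n/p_m$.
\end{itemize}
Summing over all parents reproduces the flow recursion of Section~\ref{sec:flows} exactly, so $G_n = \Fn$.

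The main point requiring care is that the recursion is well-defined and the path decomposition is genuinely a partition. Acyclicity guarantees that every path is finite, that $\Pi(r,n)$ is finite, and that each path has a unique last edge. We also need $p_m(\mathbf{x})>0$ so that the responsibilities are defined; we assume this, as the flow definition itself does. Multiple parallel routes through the DAG are handled automatically by the sum over paths, so no separate treatment of shared substructure is needed.

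Finally, in a tree each node has at most one parent, so $\Pi(r,n)=\{\pi(r,n)\}$ and the sum collapses to the single product. This gives the stated tree formula.
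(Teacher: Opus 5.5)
Your proof is correct and follows essentially the same route as the paper's: an induction from the root in which the root-to-$n$ paths are partitioned by their last edge, with only sum edges contributing a responsibility factor, so that the path sum reproduces the flow recursion. The differences are bookkeeping, and they work in your favor. You induct over a topological order rather than over depth, where the paper asserts that every parent sits at depth $d-1$. You case on the parent's type rather than on the type of $n$, which avoids the paper's reliance on the alternating-layer property. As a result the DAG argument holds for general circuits, and the tree formula follows as a special case without a separate induction.
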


Two consequences follow. First, product depth does not attenuate flow: only sum edges do, so the notion of ``depth'' relevant to sharpness is the number of upstream sum edges rather than the raw topological depth. Second, in a DAG a shared node can receive flow through several contexts, and converging paths may compensate for path-wise attenuation; flow therefore need not decay monotonically with depth. The following conditional bound however holds along tree paths.

\begin{corollary}
\label{cor:attenuation}
Consider a tree-structured PC. If every sum-edge responsibility on the root-to-$n$ path satisfies $r_e(\mathbf{x})\le\rho<1$, then
$
\Fn(\mathbf{x})\le\rho^{d_\Sigma(n)},
$
and
$
\Tnn(\mathbf{x})\le\rho^{2d_\Sigma(n)}\tn(\mathbf{x}),
\label{eq:conditional-attenuation}
$
where $d_\Sigma(n)$ is the number of upstream sum edges.
\end{corollary}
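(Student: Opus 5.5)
The plan is to combine Lemma~\ref{lemma:propagation} with Theorem~\ref{thm:decomp}, and to use nothing beyond nonnegativity and monotonicity of products of numbers in $[0,1]$.

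\textbf{Bound on the flow.} In a tree-structured PC there is a unique root-to-$n$ path $\pi(r,n)$. Lemma~\ref{lemma:propagation} then gives
\[
\Fn(\mathbf{x})=\prod_{e\in\pi(r,n)\cap E_{\mathrm{sum}}} r_e(\mathbf{x}),
\]
a product with exactly $d_\Sigma(n)$ factors. Each responsibility $r_e(\mathbf{x})=\theta_{mn'}p_{n'}(\mathbf{x})/p_m(\mathbf{x})$ is nonnegative, since weights and densities are nonnegative. By hypothesis, each also satisfies $r_e(\mathbf{x})\le\rho$.

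For nonnegative numbers, multiplying termwise upper bounds preserves the inequality. This can be checked by a one-line induction on the number of factors: if $0\le a\le\rho$ and $0\le b\le \rho^k$, then $ab\le \rho\, b\le \rho^{k+1}$, using $a\ge0$, $b\ge 0$ and $\rho\ge0$. Hence $\Fn(\mathbf{x})\le\rho^{d_\Sigma(n)}$. If $d_\Sigma(n)=0$, the product is empty, so $\Fn=1=\rho^0$ and the bound holds trivially.

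\textbf{Bound on the trace contribution.} Squaring is monotone on $[0,\infty)$, and $\Fn(\mathbf{x})\ge0$ as a product of nonnegative terms. Therefore $\Fn(\mathbf{x})^2\le\rho^{2d_\Sigma(n)}$. By Theorem~\ref{thm:decomp}, $\Tnn(\mathbf{x})=\Fn(\mathbf{x})^2\,\tn(\mathbf{x})$. Since $\tn(\mathbf{x})=\sum_c (p_c/p_n)^2\ge0$, we may multiply the previous inequality by $\tn(\mathbf{x})$ and obtain $\Tnn(\mathbf{x})\le\rho^{2d_\Sigma(n)}\tn(\mathbf{x})$.

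\textbf{Subtle points.} There is no real obstacle here. The only care needed is to apply the theorem only where it is valid: it assumes strictly positive outgoing weights at $n$ and $p_n(\mathbf{x})>0$, so that the ratios are defined. The hypothesis $\rho<1$ is not needed for the inequality itself; it only makes the bound a genuine decay in $d_\Sigma(n)$.
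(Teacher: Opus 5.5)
Your proposal is correct and follows the paper's proof essentially step for step: you use Lemma~\ref{lemma:propagation} to write $\Fn(\mathbf{x})$ as a product of $d_\Sigma(n)$ responsibilities, each at most $\rho$, and then use Theorem~\ref{thm:decomp} together with $\tn(\mathbf{x})\ge 0$ to carry the squared bound over to $\Tnn(\mathbf{x})$. Beyond the paper's argument, you make explicit the nonnegativity of the responsibilities (needed for both the product bound and the squaring step), the empty-product case, and the well-definedness conditions; the paper leaves these implicit.
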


The corollary formalizes a mechanism in which the contextual factor can suppress a locally sharp node $n$ reached only through many attenuating routing decisions: its curvature is present, but its global trace contribution is discounted geometrically in the number of upstream \emph{sum} edges $d_\Sigma(n)$.
Thus, the global trace can concentrate on shallow, heavily-used nodes, which we also confirm as illustrated in Figure~\ref{fig:depth-contrib}.

\subsection{Global-Local Ranking Reversals}
\label{sec:reversal}
The decomposition also implies that ranking nodes by their global sharpness contribution is not the same as ranking them by their intrinsic local curvature. A simple analysis gives us the  following exact condition for disagreement.
\begin{proposition}
\label{prop:reversal} 
For any two sum nodes $i,j$, $T_i>T_j \iff t_i/t_j > (F_j/F_i)^2$. In particular, a locally less curved node $i$ with $t_i<t_j$ outranks a locally sharper node $j$ whenever $F_i/F_j > \sqrt{t_j/t_i}$.
\end{proposition}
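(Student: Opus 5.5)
The plan is to reduce everything to Theorem~\ref{thm:decomp}, which gives $T_i = F_i^2 t_i$ and $T_j = F_j^2 t_j$ for the same input $\mathbf{x}$ (suppressed from notation, as in the statement). The claimed equivalence is then a rearrangement of the inequality $F_i^2 t_i > F_j^2 t_j$. That rearrangement is legitimate once we know the quantities involved are strictly positive. So the one genuine step is to fix the positivity conventions under which the ratios $t_i/t_j$ and $F_j/F_i$ are defined.

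\textbf{Positivity.} Under the hypotheses of Theorem~\ref{thm:decomp} (strictly positive outgoing weights), $p_n(\mathbf{x}) > 0$ at any sum node whose children are not all zero at $\mathbf{x}$. In that case $t_n = \sum_c (p_c/p_n)^2 > 0$; in fact $t_n \ge 1/|\ch(n)|$, by Cauchy--Schwarz applied to $\sum_c \theta_{nc}\,p_c/p_n = 1$ together with $\theta_{nc} \le 1$.

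The flows need more care. By Lemma~\ref{lemma:propagation}, $F_n$ is a sum over paths of products of responsibilities, each of which is nonnegative. Hence $F_n \ge 0$, with $F_n > 0$ exactly when some root-to-$n$ path has all responsibilities positive. I would state explicitly that the ratio condition is understood for nodes with $F_i, F_j > 0$, which is the regime where the ratio $F_j/F_i$ in the statement makes sense. I expect this to be the only delicate point, and it is a matter of stating assumptions rather than of proof.

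\textbf{The equivalence.} With $F_i, F_j, t_i, t_j > 0$, divide both sides of $F_i^2 t_i > F_j^2 t_j$ by the positive number $F_i^2 t_j$. This gives $t_i/t_j > (F_j/F_i)^2$. Every step is reversible, so this establishes the biconditional $T_i > T_j \iff t_i/t_j > (F_j/F_i)^2$.

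\textbf{The ``in particular'' clause.} Assume $t_i < t_j$. Rewrite the condition $t_i/t_j > (F_j/F_i)^2$ as $(F_i/F_j)^2 > t_j/t_i$. Both sides are positive, and the square root is strictly monotone on the positive reals, so this is equivalent to $F_i/F_j > \sqrt{t_j/t_i}$. Combined with the biconditional, this shows that the locally less curved node $i$ satisfies $T_i > T_j$, that is, it outranks $j$ in global contribution.

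The same argument applies verbatim to the empirical averages $\widehat{T}_n$, provided one works per sample. For population averages the decomposition does not factor through averages, so I would state the result pointwise in $\mathbf{x}$, as Theorem~\ref{thm:decomp} does.
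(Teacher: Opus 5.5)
Your proposal is correct and follows the paper's route. You substitute $T_i = F_i^2 t_i$ and $T_j = F_j^2 t_j$ from Theorem~\ref{thm:decomp} and rearrange the inequality; the ``in particular'' clause then follows by taking square roots. Your explicit assumption that $F_i, F_j, t_i, t_j > 0$, which makes the ratios defined and the division by $F_i^2 t_j$ legitimate, is a detail the paper's three-line proof leaves implicit, so your version is slightly more careful.
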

The two orderings answer different questions: ranking by $\Tnn$ identifies \emph{which parameter block strongly affects the root likelihood}, whereas ranking by $\tn$ identifies \emph{which local mixture computation is most sharply curved}. Because $\Tnn$ is amplified by the structural factor $\Fn^2$, which is largest at shallow, heavily-used nodes, the top-$\Tnn$ nodes need not be the locally sharp ones, accounting for the failure of top-$\Tnn$ selection we saw earlier. The ranking reversals are not a theoretical edge case: in the trained circuit of Figure~\ref{fig:curvature-decomposition}, the root node combines maximal usage with the \emph{lowest} local trace among the labeled nodes yet attains the largest global contribution, while a deep node with local trace larger by nearly three orders of magnitude contributes almost nothing globally.

\begin{figure}[ht]
    \centering
    \includegraphics[width=0.9\linewidth]
    {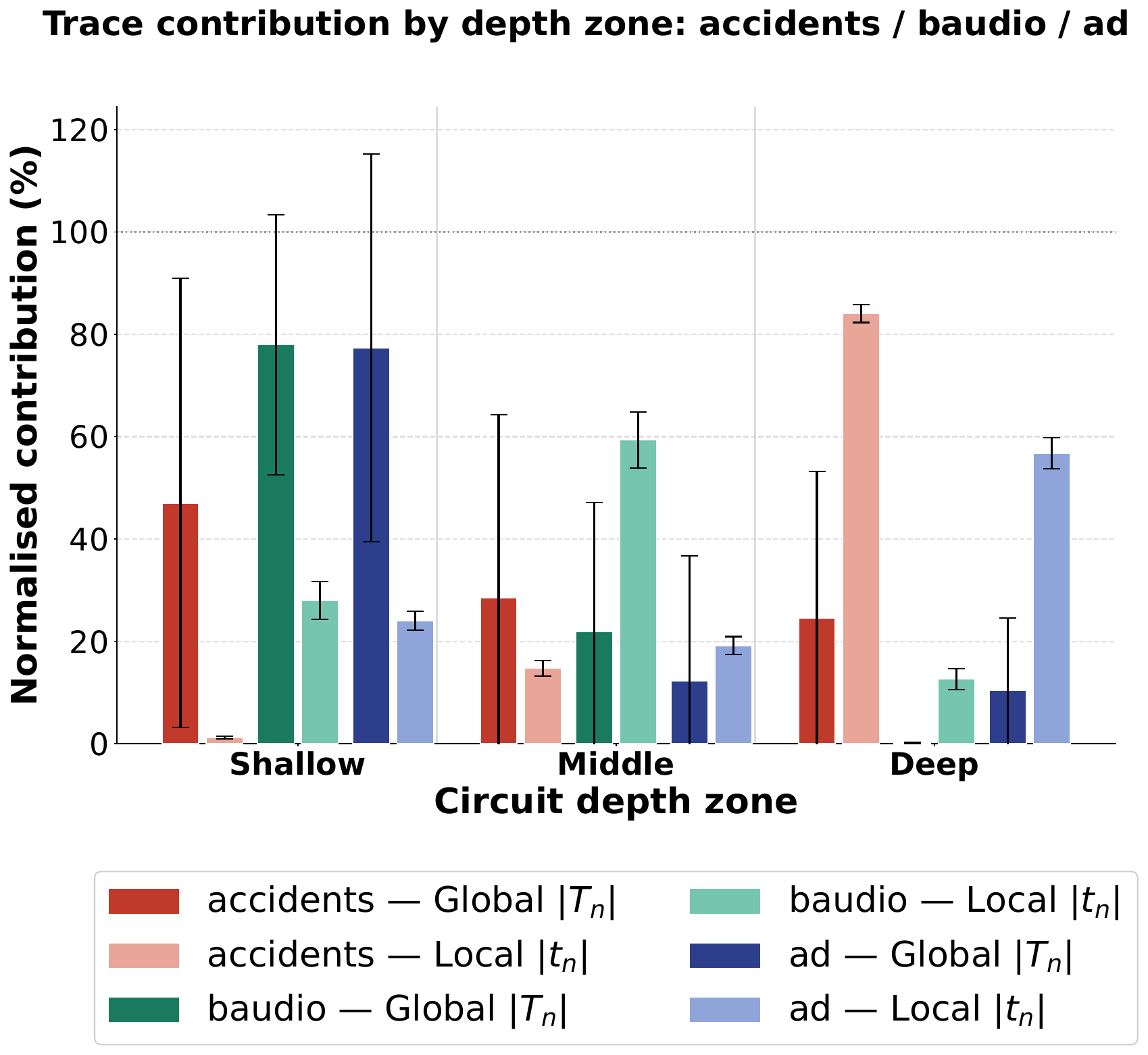}
    \caption{\textbf{Global trace contributions are biased toward early circuit partitions.}
    Normalized global trace contribution across depth partitions on three representative benchmarks. The observed concentration near earlier partitions is consistent with attenuation through upstream sum-node routing, although raw depth alone does not determine flow.}
    \label{fig:depth-contrib}
\end{figure}
\begin{figure}[ht]
    \centering
    \includegraphics[width=0.9\linewidth]
    {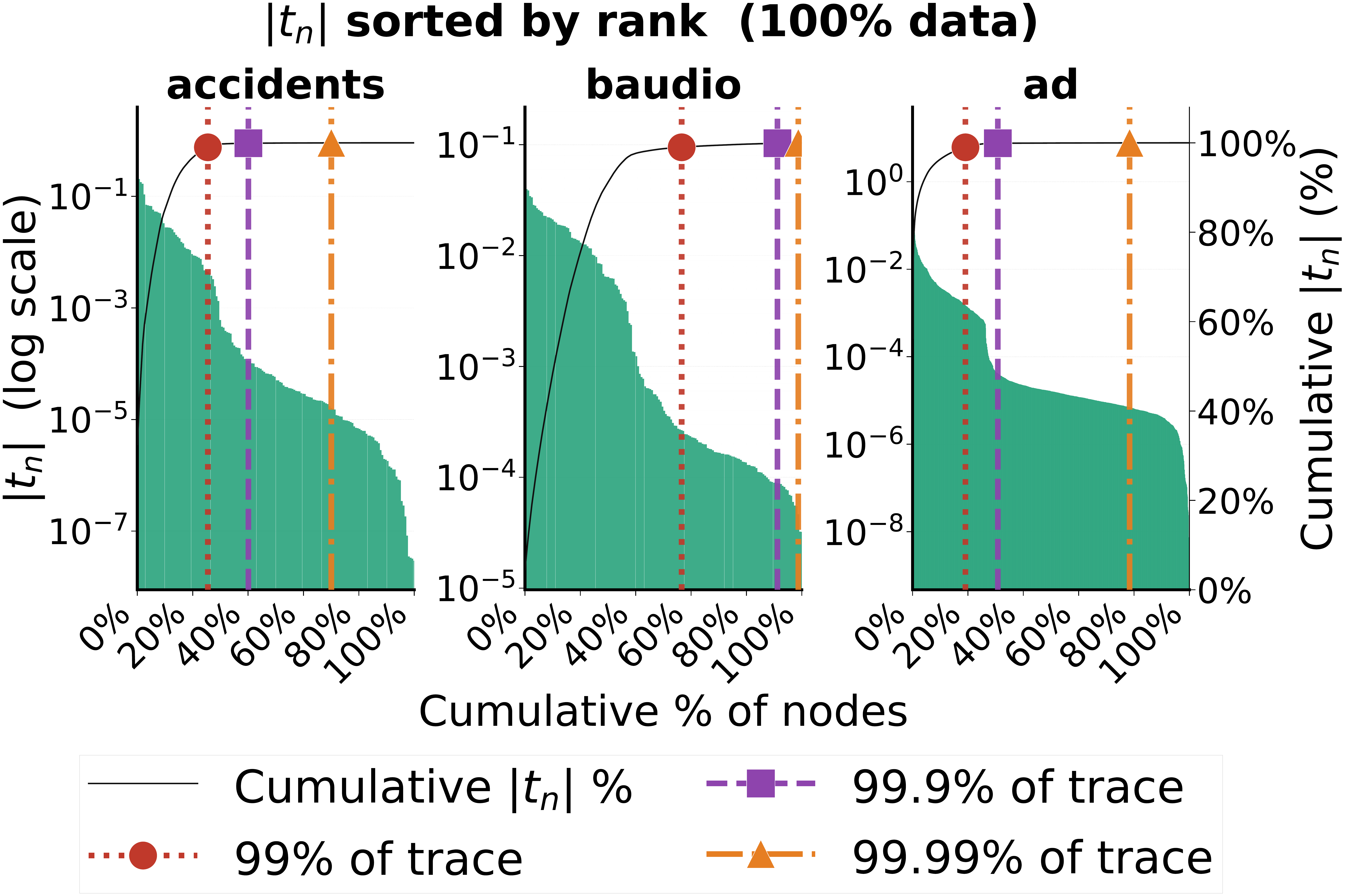}
    \caption{\textbf{Local trace is lesser concentrated.}
    Cumulative share of $\sum_n\that$ as nodes are ranked by $\that$. On the evaluated circuits, the
    local trace is distributed more broadly than the global contribution, indicating that part of the concentration in $\Th$ is
    introduced by contextual usage.}
    \label{fig:ltrace-concentration}
\end{figure}

\begin{figure}[ht]
    \centering
    \includegraphics[width=0.9\linewidth]{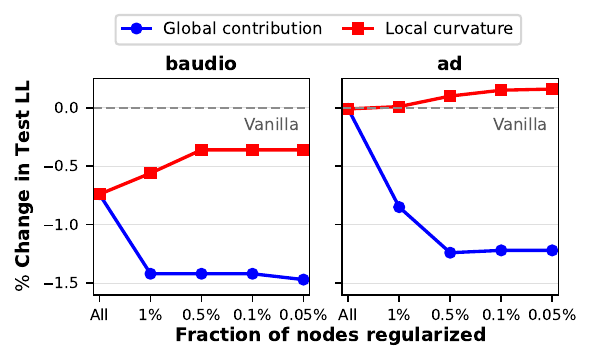}
    \caption{\textbf{Local-curvature selection preserves fit more effectively than global-contribution selection.}
    Change in test log-likelihood relative to the unregularized model as regularization is applied to all sum nodes or restricted to progressively smaller fractions ranked by global contribution or local curvature. Selecting by global contribution degrades performance whereas selecting by local curvature preserves the baseline on \texttt{baudio} and improves it on \texttt{ad}.}
    \label{fig:global-vs-local}
\end{figure}

\section{Adaptive Sharpness-Aware Learning}
\label{sec:method}

The decomposition suggests that global contribution and local curvature should play different roles during learning. The global term $\Th=\Eh[\Fn^2\tn]$ measures a node's contribution to root-level curvature, but also reflects its contextual usage. We therefore propose to use the local trace to determine \emph{where} regularization should act, while retaining the global trace penalty as the quantity being controlled. This gives a simple modification of global sharpness-aware learning: each sum node receives a gate derived from its local curvature, and the existing trace penalty is scaled by that gate.

\paragraph{Local-Curvature Gating}
\label{sec:gate}
Let $\that=\frac{1}{N}\sum_{i=1}^{N}\tn(\mathbf{x}_i)$ denote the empirical local trace of node $n$. We assign each sum node a gate $\omega_n\in[0,1]$ and define
\begin{equation*}
R_{\boldsymbol{\omega}}(\boldsymbol{\theta})
=\sum_{n\in\Sset}\omega_n\Th
=\frac{1}{N}\sum_{i=1}^{N}\sum_{n\in\Sset}\omega_n\sum_{c\in\ch(n)}
\left(\frac{\Fnc(\mathbf{x}_i)}{\theta_{nc}}\right)^2
\label{eq:weighted}
\end{equation*}
The global trace regularizer is recovered when $\omega_n=1$ for every node. We choose $\omega_n=g(\that)$ with $g$ monotone increasing, so that nodes with larger local curvature receive stronger regularization independently of their flow. To reduce sensitivity to the scale of $\that$, we use the bounded gate
\begin{equation}
\omega_n=\frac{\that}{\max_{m\in\Sset}\widehat{t}_m},
\label{eq:maxnorm}
\end{equation}
The node with the largest empirical local trace receives the full penalty, while the remaining nodes are scaled by their curvature relative to that maximum. Gates are recomputed from the current model and held fixed during each parameter update.
This separates two roles that are coupled by the global regularizer: $\that$ allocates regularization across nodes, while $\Th$ remains the curvature contribution being penalized.

\begin{table}[t]
\centering
\small
\caption{Test log-likelihood on DEBD benchmark datasets (higher is better, mean $\pm$ std over 5 seeds). Best $\mu$/method selected via validation LL. 
}
\label{tab:debd_test_ll}
\begin{tabular}{lrrr}
\toprule
\textbf{Dataset} & \textbf{Vanilla} & \textbf{Global Trace} & \makecell{\textbf{Gated}\\\textbf{Regularization}} \\
\midrule
\textit{accidents} & \underline{-26.64$\pm$0.02} & -29.79$\pm$0.13 & \textbf{-26.55$\pm$0.01} \\
\textit{ad} & \underline{-18.40$\pm$0.09} & -20.36$\pm$0.05 & \textbf{-18.10$\pm$0.04} \\
\textit{baudio} & \underline{-39.62$\pm$0.02} & -42.45$\pm$0.02 & \textbf{-39.62$\pm$0.01} \\
\textit{bbc} & -269.58$\pm$0.41 & \textbf{-256.40$\pm$0.10} & \underline{-260.07$\pm$0.26} \\
\textit{bnetflix} & \underline{-56.22$\pm$0.02} & -59.67$\pm$0.02 & \textbf{-56.21$\pm$0.02} \\
\textit{book} & \underline{-34.34$\pm$0.04} & -36.35$\pm$0.03 & \textbf{-34.16$\pm$0.01} \\
\textit{c20ng} & \textbf{-151.99$\pm$0.13} & -157.64$\pm$0.05 & \underline{-152.22$\pm$0.13} \\
\textit{cr52} & \underline{-99.09$\pm$1.15} & -102.23$\pm$1.45 & \textbf{-97.99$\pm$0.70} \\
\textit{cwebkb} & \underline{-154.71$\pm$0.33} & -157.30$\pm$0.02 & \textbf{-152.90$\pm$0.15} \\
\textit{dna} & -87.78$\pm$0.24 & \underline{-81.33$\pm$0.00} & \textbf{-81.28$\pm$0.08} \\
\textit{jester} & \textbf{-52.82$\pm$0.03} & -55.74$\pm$0.05 & \underline{-52.82$\pm$0.03} \\
\textit{kdd} & \textbf{-2.22$\pm$0.01} & -2.36$\pm$0.00 & \underline{-2.22$\pm$0.00} \\
\textit{kosarek} & \textbf{-10.59$\pm$0.01} & -11.08$\pm$0.04 & \underline{-10.59$\pm$0.02} \\
\textit{msnbc} & \underline{-6.12$\pm$0.01} & -6.47$\pm$0.01 & \textbf{-6.08$\pm$0.00} \\
\textit{msweb} & \textbf{-9.73$\pm$0.01} & -10.40$\pm$0.02 & \underline{-9.73$\pm$0.01} \\
\textit{nltcs} & \underline{-6.00$\pm$0.00} & -6.48$\pm$0.00 & \textbf{-6.00$\pm$0.00} \\
\textit{plants} & \underline{-12.68$\pm$0.03} & -15.99$\pm$0.02 & \textbf{-12.67$\pm$0.01} \\
\textit{pumsb\_star} & \underline{-22.78$\pm$0.06} & -28.29$\pm$0.06 & \textbf{-22.75$\pm$0.04} \\
\textit{tmovie} & \textbf{-42.16$\pm$0.03} & -48.84$\pm$0.04 & \underline{-42.21$\pm$0.06} \\
\textit{tretail} & \textbf{-10.84$\pm$0.01} & -10.98$\pm$0.01 & \underline{-10.85$\pm$0.01} \\
\bottomrule
\end{tabular}
\end{table}

\paragraph{Gated EM Update}
\label{sec:emupdate-method}
Let $\Nnc=\sum_i\Fnc(\mathbf{x}_i)$ and $\Snc=\sum_i\Fnc(\mathbf{x}_i)^2$ denote the expected edge count and edge-flow second moment.
With fixed gates, directly optimizing the empirical trace penalty gives the following objective:
\begin{equation*}
\max_{\boldsymbol{\theta}_n}\ \sum_c\Nnc\log\theta_{nc}
-\mu\omega_n\sum_c\frac{\Snc}{\theta_{nc}^2}
\quad\text{s.t.}\quad \sum_c\theta_{nc}=1.
\label{eq:emobj}
\end{equation*}
Its stationarity condition, $\Nnc/\theta_{nc}-\lambda+2\mu\omega_n\Snc/\theta_{nc}^3=0$, is cubic in $\theta_{nc}$.
To preserve the tractable EM updates of global trace-regularized learning, we apply the same surrogate as used by \cite{SureshAAAI2026} with the node-specific strength
$\mu_n=\mu\omega_n$, which yields the following update equation.


\begin{proposition}[]
\label{prop:gated-update}
For fixed gates $\{\omega_n\}_{n\in\Sset}$, under the surrogate used by global trace-regularized EM, the update for each outgoing weight of sum node $n$ satisfies
$
\lambda_n\theta_{nc}^2
-
\Nnc\theta_{nc}
-
\mu\omega_n\Nnc
=
0,
$
and its positive solution is given by
\begin{equation}
\theta_{nc}
=
\frac{
\Nnc+
\sqrt{\Nnc^2+4\lambda_n\mu\omega_n\Nnc}
}{
2\lambda_n
},
\label{eq:gatedupdate}
\end{equation}
\end{proposition}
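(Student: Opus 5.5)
The plan is to reduce the gated update to the global one of Eq.~\eqref{eq:emupdate} by treating the gate as a node-specific rescaling of the regularization strength. The key observation is that, for fixed gates, the objective $\sum_{n}\big[\sum_c \Nnc\log\theta_{nc}\big] - \mu R_{\boldsymbol{\omega}}(\boldsymbol{\theta})$ decouples across sum nodes in the M-step. Only the parameter block $\boldsymbol{\theta}_n$ of node $n$ enters its own simplex constraint. Within that block, the penalty is the global trace penalty multiplied by the constant $\omega_n$, since the edge flows are held fixed at their E-step values.

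Hence I will solve one node's problem, with the global strength $\mu$ replaced by $\mu_n=\mu\omega_n$. The gates are recomputed from the current model and then frozen, so $\omega_n$ carries no $\boldsymbol{\theta}$-dependence during the update and contributes no extra gradient terms.

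Next, I will invoke the surrogate of \cite{SureshAAAI2026} exactly as in the ungated case. I expect this to be the main obstacle, because the statement only says ``under the surrogate''. I therefore need to pin down which replacement of the penalty term $\Snc/\theta_{nc}^2$ turns the cubic stationarity condition into a quadratic one.

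My reading is that the surrogate makes the penalty's contribution to the stationarity condition equal to $\mu_n\Nnc/\theta_{nc}^2$, in place of $2\mu_n\Snc/\theta_{nc}^3$. One way to see this is to linearise the second-moment term around the current responsibilities, so that the per-edge penalty behaves like $-\mu_n\Nnc/\theta_{nc}$. Whatever the precise form, I will simply take their derived global stationarity condition, $\Nnc/\theta_{nc}-\lambda+\mu\Nnc/\theta_{nc}^2=0$. This condition reproduces Eq.~\eqref{eq:emupdate}, which fixes it uniquely. I will then substitute $\mu\mapsto\mu\omega_n$ and attach a node-wise multiplier $\lambda_n$ for the constraint $\sum_c\theta_{nc}=1$.

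From here the derivation is routine. I form the Lagrangian $\sum_c\Nnc\log\theta_{nc}-\mu\omega_n\sum_c\Nnc/\theta_{nc}-\lambda_n\big(\sum_c\theta_{nc}-1\big)$ and set its partial derivative in $\theta_{nc}$ to zero. Multiplying through by $\theta_{nc}^2>0$ gives $\lambda_n\theta_{nc}^2-\Nnc\theta_{nc}-\mu\omega_n\Nnc=0$.

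The quadratic formula then gives two roots. Since $\mu\omega_n\Nnc\ge 0$ and $\lambda_n>0$, the product of the roots is nonpositive, so exactly one root is nonnegative. That root is the one in Eq.~\eqref{eq:gatedupdate}, since the square root is at least $\Nnc$.

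Finally, $\lambda_n$ is determined by normalization. Each root is continuous and strictly decreasing in $\lambda_n$, and it tends to $\infty$ as $\lambda_n\to 0^+$ and to $0$ as $\lambda_n\to\infty$. So $\sum_c\theta_{nc}(\lambda_n)=1$ has a unique solution, which can be found by one-dimensional search. As a consistency check, setting $\omega_n=1$ recovers Eq.~\eqref{eq:emupdate}, and setting $\omega_n=0$ recovers the standard EM update $\theta_{nc}=\Nnc/\lambda_n$ with $\lambda_n=\sum_c\Nnc$.
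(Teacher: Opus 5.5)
Your proposal is correct and follows essentially the paper's route. The paper also adopts the surrogate constraint $\omega_n\sum_c \Nnc/\theta_{nc}\le m$ (your penalty $\mu\omega_n\sum_c\Nnc/\theta_{nc}$), sets $\Nnc/\theta_{nc}-\lambda_n+\mu\omega_n\Nnc/\theta_{nc}^2$ to zero, and applies the quadratic formula. Your version is more internally consistent than the appendix, whose Lagrangian displays $(\Nnc/\theta_{nc})^2$ but differentiates the unsquared term, and your decoupling and normalization arguments go beyond what the paper provides. One small fix: when $\mu\omega_n\Nnc=0$ both roots are nonnegative, so select by ``exactly one positive root'' (for $\Nnc>0$).
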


The update thus preserves the form and asymptotic complexity of the global method, changing only the effective node-wise strength,
$\mu\mapsto\mu\omega_n$. It recovers global trace regularization when $\omega_n=1$ and approaches the unregularized EM update as $\omega_n\rightarrow0$. Though we adopt this simple gate as proof of concept, Prop.~\ref{prop:gated-update} holds for any monotone gate, and richer gate functions are natural directions for future work.

\section{Experiments}
\label{sec:experiments}

We organize the experiments around three questions that connect the theory to the proposed method:\\
\noindent\textbf{(Q1)} How strongly does contextual flow shape the distribution of global trace contributions?\\
\noindent\textbf{(Q2)} Does selecting nodes by local curvature preserve model fit more effectively than selecting by global contribution?\\
\noindent\textbf{(Q3)} Does global trace regularization underfit, and can adaptive local-curvature gating recover the lost capacity?

\paragraph{Experimental setup.}
We evaluate on the 20 binary density-estimation benchmarks (DEBD) using Hidden Chow-Liu Trees (HCLTs)~\cite{LiuNeurIPS2021}. All models are implemented in PyJuice~\cite{LiuICML2024} with a latent size $100$ and trained using EM. We compare the unregularized model, global trace regularization, and the proposed local-curvature-gated regularizer.
Results are averaged over $5$ random seeds, and the regularization strength $\mu$ is selected independently for each method using validation log-likelihood.
Further experimental details are provided in the appendix.

\paragraph{Q1: Anatomy of the global trace.}
We first examine how the factors in $\Tnn=\Fn^2\tn$ shape global sharpness. Figure~\ref{fig:depth-contrib} shows that global trace contributions concentrate in earlier circuit partitions, consistent with attenuation theory through upstream sum-node routing. Figure~\ref{fig:ltrace-concentration} further shows that local curvature is substantially less concentrated than global contribution: the top $10\%$ of nodes account for more than $99.99\%$ of $\Th$, but it takes over $60\%$ of nodes to contribute the same for $\that$. Thus, concentration in the global trace is primarily caused by the contextual usage than local curvature.

\paragraph{Q2: Controlled node selection.}
We next isolate the effect of the ranking criterion. Holding the regularization strength and selected fraction fixed, we apply the trace penalty to nodes chosen by global contribution $\Th$ or local curvature $\that$. 
Figure~\ref{fig:global-vs-local} compares the two criteria on \texttt{baudio} and \texttt{ad}. Restricting the penalty to progressively smaller sets of high-$\Th$ nodes reduces test log-likelihood on both datasets. In contrast, selecting by $\that$ preserves the unregularized fit on \texttt{baudio} and improves it slightly on \texttt{ad}. These results show that global contribution and local curvature provide materially different signals for allocating regularization.

\paragraph{Q3: Underfitting and recovery.}
Table~\ref{tab:debd_test_ll} compares the three methods across all 20 DEBD datasets. Global trace regularization improves over the unregularized model on only 2 datasets and degrades performance on the remaining 18, indicating
underfitting. In contrast, the proposed gated method outperforms global trace regularization and matches or improves upon the unregularized model on majority of the datasets.
These results show that allocating the trace penalty according to local curvature substantially mitigates the loss of fit induced by uniform global regularization.

\section{Conclusion}
\label{sec:conclusion}
Overall, in this paper we studied sharpness aware learning in probabilistic circuits through a compositional lens. Our analysis showed that a node's global curvature contribution separates exactly into contextual usage and intrinsic local curvature, providing insights into why global trace regularization can flatten the wrong parts of the model and induce underfitting. Guided by this decomposition, we introduced a gated regularizer that preserves tractable learning, while allocating regularization effectively to prevent underfitting. Future work involves developing richer gating functions that jointly account for local geometry and contextual influence. More broadly, the decomposition also opens up directions for  curvature-aware model compression, targeted robustness interventions, and adaptive circuit design, where contextual usage and local sensitivity can jointly guide which components to preserve, regularize, prune, or expand.

\section*{Acknowledgments}
SN and SS gratefully acknowledge the generous support by the AFOSR award FA9550-23-1-0239, the ARO award W911NF2010224 and the DARPA Assured Neuro Symbolic Learning and Reasoning (ANSR) award HR001122S0039. CK and HS gratefully acknowledge Dr. Anji Liu for the discussions related to the work and CK, HS, SPV and YZ thank IIT Palakkad for the access to Madhava Cluster. 

\bibliography{refer}
\appendix

\onecolumn

\noindent \hrulefill
\section*{Supplementary Material:\\A Compositional Theory of Curvature in Probabilistic Circuits}

\noindent \hrulefill

\section{Preliminaries and Notation}
\label{app:notation}

We first revisit the definitions and notation used throughout the proofs, so that each subsequent argument can be read in isolation. 

\paragraph{Probabilistic circuits.}
We consider smooth and decomposable probabilistic circuits with alternating layers of sum and product nodes over a set of tractable input distributions. The root node $r$ computes the model density $\pr(\x)$. Each sum node $n\in\Sset$ computes a convex combination of its children, $p_n(\x)=\sum_{c\in\ch(n)}\theta_{nc}\,p_c(\x)$, with nonnegative weights normalized on the simplex, $\sum_{c\in\ch(n)}\theta_{nc}=1$. We write $\ell(\boldsymbol{\theta};\x)=-\log \pr(\x)$ for the per-sample negative log-likelihood (NLL) and study the trace of its Hessian with respect to the sum weights $\boldsymbol{\theta}$.

\paragraph{Output ratios and routing responsibilities.}
For a sum node $n$ and child $c$ we define the \emph{output ratio} $\rho_{nc}(\x)=p_c(\x)/p_n(\x)$ and the \emph{routing responsibility} $r_{nc}(\x)=\theta_{nc}\,\rho_{nc}(\x)$. The responsibilities form a distribution over children, $\sum_{c\in\ch(n)}r_{nc}(\x)=1$, and we collect the output ratios into the vector $\rr_n(\x)=(\rho_{nc}(\x))_{c\in\ch(n)}$.

\paragraph{Circuit flow.}
The circuit flow measures the top-down usage of each node, and is defined by the node-type recursion below.
\begin{definition}[Circuit flow]\label{def:app-circuitflow}
The flow $F_n(\x)$ of a node $n$ is
\begin{equation}
F_n(\x)=
\begin{cases}
1, & n=r \ \text{(root)},\\[4pt]
\displaystyle\sum_{m\in\pa(n)} F_m(\x), & n \ \text{a sum node},\\[10pt]
\displaystyle\sum_{m\in\pa(n)} \theta_{mn}\,\frac{p_n(\x)}{p_m(\x)}\,F_m(\x), & n \ \text{a product or input node},
\end{cases}
\label{eq:app-flowrec}
\end{equation}
\end{definition}
where $\pa(n)$ denotes the parents of $n$.
By the alternating-layer property, the responsibility factor $\theta_{mn}\,p_n/p_m=r_{mn}$ attaches on exactly the \emph{sum edges} of the circuit (edges out of a sum node), while product and input nodes inherit their parents' flow undivided.

\paragraph{Edge flow.}
The flow carried by the edge from sum node $n$ to child $c$ is $F_{nc}(\x)=\theta_{nc}\,\rho_{nc}(\x)\,F_n(\x)=r_{nc}(\x)\,F_n(\x)$. It satisfies $F_{nc}/\theta_{nc}=\rho_{nc}\,F_n$.

\paragraph{Local and global trace.}
The \emph{local trace} of sum node $n$ is $\tn(\x)=\sum_{c\in\ch(n)}\rho_{nc}(\x)^2$, and its \emph{global trace contribution} is $\Tnn(\x)=\sum_{c\in\ch(n)}(F_{nc}(\x)/\theta_{nc})^2$. Theorem~\ref{thm:decomp} shows these are related by the exact identity $\Tnn=\Fn^2\,\tn$. Summing over sum nodes recovers the full NLL Hessian trace, $\Tr(\nabla^2_{\boldsymbol{\theta}}\ell)=\sum_{n\in\Sset}\Tnn\ge 0$. Empirical batch estimators are denoted $\that$ and $\Th$.


\section{Proofs for Theoretical Results}
In this section, we provide complete, self-contained proofs for all of the key lemmas, propositions, and theorems stated in the
main paper
\begin{theorem}[]
Consider any sum node \(n\) with strictly positive outgoing weights in a smooth, decomposable PC. For any input \(\mathbf{x}\), its global trace contribution $\Tnn(\mathbf{x})$
factorizes as
$
\Tnn(\mathbf{x})
=
\Fn(\mathbf{x})^2\,\tn(\mathbf{x}),
$
$
\tn(\mathbf{x})
=
\sum_{c\in\ch(n)}
(p_c(\mathbf{x})/p_n(\mathbf{x}))^2
$
Consequently,
$
\Tr\left(
\nabla_{\boldsymbol{\theta}}^2
\ell(\boldsymbol{\theta};\mathbf{x})
\right)
=
\sum_{n\in\Sset}
\Fn(\mathbf{x})^2\,\tn(\mathbf{x}).
$
\end{theorem}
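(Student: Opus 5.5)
The plan is to prove the factorization edge by edge and then sum. The key input is the edge-flow identity of Eq.~\eqref{eq:edgeflow}: $\Fnc(\mathbf{x})=\theta_{nc}\,(p_c(\mathbf{x})/p_n(\mathbf{x}))\,\Fn(\mathbf{x})$. The positivity assumption $\theta_{nc}>0$ lets us divide by $\theta_{nc}$. This gives $\Fnc(\mathbf{x})/\theta_{nc}=\rho_{nc}(\mathbf{x})\,\Fn(\mathbf{x})$. The ratio is well defined whenever $p_n(\mathbf{x})>0$, which is implicit in the definition of flows and of $\ell$; when $p_n(\mathbf{x})=0$ we can adopt the convention that both sides vanish.

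Next, substitute this into the definition of $\Tnn$ in Eq.~\eqref{eq:Tn}. Squaring each term gives $(\Fnc/\theta_{nc})^2=\Fn^2\rho_{nc}^2$. The factor $\Fn(\mathbf{x})^2$ does not depend on $c$, so it can be pulled out of the sum over $\ch(n)$. This yields $\Tnn(\mathbf{x})=\Fn(\mathbf{x})^2\sum_{c}\rho_{nc}(\mathbf{x})^2=\Fn(\mathbf{x})^2\,\tn(\mathbf{x})$.

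For the consequence, use Eq.~\eqref{eq:trace} from the background. That identity writes the Hessian trace of the NLL as the sum over all sum-edge terms $(\Fnc/\theta_{nc})^2$; it relies on the multilinearity of $\pr$ in each weight, established by \cite{SureshAAAI2026}. Group these terms by their source sum node, so that the trace equals $\sum_{n\in\Sset}\Tnn$. Applying the per-node identity to every node (each assumed to have positive weights) gives the stated formula.

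The only delicate point is justifying the edge-flow identity $\partial\log\pr/\partial\theta_{nc}=\Fnc/\theta_{nc}$ in a DAG. If we want this to be self-contained rather than cited, the argument is as follows. By the chain rule through $p_n$, $\partial\log\pr/\partial\theta_{nc}=(\partial\log\pr/\partial\log p_n)\cdot p_c/p_n=\Fn\,\rho_{nc}$. This uses the characterization $\Fn=\partial\log\pr/\partial\log p_n$ stated in Section~\ref{sec:flows}. It also requires that $\theta_{nc}$ enters $\pr$ only through $p_n$, which holds because the weights are untied. Everything else is algebra.
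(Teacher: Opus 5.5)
Your proposal is correct and follows the paper's proof essentially step for step. Both substitute the edge-flow identity $F_{nc}/\theta_{nc}=\rho_{nc}F_n$ into the definition of $T_n$, pull the $c$-independent factor $F_n^2$ out of the sum to obtain $t_n$, and then sum over sum nodes using the cited trace identity. Your added remarks, namely the chain-rule justification of $\partial\log p_r/\partial\theta_{nc}=F_{nc}/\theta_{nc}$ via untied weights and the convention for $p_n(\mathbf{x})=0$, go slightly beyond what the paper spells out but do not change the route.
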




\begin{proof}
The global trace contribution of node $n$ is the sum over its outgoing edges of the squared diagonal Hessian entries, which equal $(F_{nc}(\x)/\theta_{nc})^2$. Substituting the edge-flow identity $F_{nc}/\theta_{nc}=\rho_{nc}\,F_n$ from Section~\ref{app:notation},
\begin{equation}
    \Tnn(\mathbf{x})
    =
    \sum_{c \in \ch(n)}
    \left(\frac{F_{nc}(\mathbf{x})}{\theta_{nc}}\right)^2
    =
    \sum_{c \in \ch(n)}
    \left(
            \frac{p_c(\textbf{x})}{p_n(\textbf{x})}\,F_n(\textbf{x})
    \right)^2 .
\end{equation}
The flow $F_n(\x)$ does not depend on the child index $c$, so it factors out of the sum,
\begin{equation}
    \Tnn(\mathbf{x})
    = F_n(\textbf{x})^2 \ \sum_{c \in \ch(n)}
    \left(
            \frac{p_c(\textbf{x})}{p_n(\textbf{x})}
    \right)^2 .
\end{equation}
Recognizing the remaining sum as the local trace $\tn(\x)=\sum_{c\in\ch(n)}\rho_{nc}(\x)^2$ yields the factorization,
\begin{equation}
        \Tnn(\mathbf{x})
    = F_n(\textbf{x})^2 \ \tn(\mathbf{x}) .
\end{equation}
Summing over all sum nodes gives the full Hessian trace $\Tr(\nabla^2_{\boldsymbol{\theta}}\ell)=\sum_{n\in\Sset}F_n^2\,\tn\ge 0$, since each term is a product of squares.
\end{proof}

\begin{proposition}[Rank-one local Hessian]
The gradient and Hessian of $\ell_n$ in the edge weights 
$\boldsymbol{\theta}_n$ are
$\nabla_{\boldsymbol{\theta}_n}\ell_n = -\rr_n$ and
$\Hn(\mathbf{x}) = \nabla^2_{\boldsymbol{\theta}_n}\ell_n = \rr_n\rr_n^\top$. Hence, $\Hn$ is
positive semidefinite and, whenever $\rr_n\neq 0$, has rank one, with unique nonzero eigenvalue
$\lambda_{\max}(\Hn)=\lVert\rr_n\rVert_2^2=\tn$. Consequently
$\tn = \Tr(\Hn) = \lVert\Hn\rVert_2 = \lVert\Hn\rVert_F$.
\end{proposition}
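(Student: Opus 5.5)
The plan is a direct computation on the isolated sum node, treating the children's outputs $p_c(\x)$ as constants independent of $\boldsymbol{\theta}_n$. The local negative log-output is
\[
\ell_n(\boldsymbol{\theta}_n;\x)=-\log s(\boldsymbol{\theta}_n), \qquad s(\boldsymbol{\theta}_n)=\sum_c\theta_{nc}p_c(\x)=p_n(\x).
\]
The proof proceeds in three steps.

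\textbf{Derivatives.} Since $s$ is linear in $\boldsymbol{\theta}_n$, its gradient is $\nabla s=\mathbf{p}=(p_c(\x))_c$ and its Hessian vanishes. The chain rule gives
\[
\nabla\ell_n=-\mathbf{p}/s=-\rr_n .
\]
Differentiating $-\mathbf{p}/s$ once more gives $\mathbf{p}\mathbf{p}^\top/s^2$, so
\[
\Hn=\rr_n\rr_n^\top .
\]
We treat the entries of $\boldsymbol{\theta}_n$ as free coordinates in the ambient space, without imposing the simplex constraint, to match the paper's convention of measuring curvature with respect to the untied weights. We also need $p_n(\x)>0$ so that $\ell_n$ and $\rr_n$ are defined; we assume this implicitly.

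\textbf{Spectral facts.} For any $v$, $v^\top\Hn v=(\rr_n^\top v)^2\ge0$, so $\Hn$ is positive semidefinite. If $\rr_n\neq0$, the image of $\Hn$ is $\mathrm{span}(\rr_n)$, so $\Hn$ has rank one. Moreover $\Hn\rr_n=\lVert\rr_n\rVert_2^2\,\rr_n$, and $\Hn$ annihilates $\rr_n^\perp$. Hence the unique nonzero eigenvalue is
\[
\lVert\rr_n\rVert_2^2=\sum_c\rho_{nc}^2=\tn .
\]

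\textbf{The three norms.} The trace is the sum of the eigenvalues, which is $\tn$; it can also be read off directly as $\sum_c\rho_{nc}^2$. For a positive semidefinite matrix the spectral norm is the largest eigenvalue, which is $\tn$. The Frobenius norm is the square root of the sum of squared eigenvalues, which is $\sqrt{\tn^2}=\tn$; equivalently,
\[
\lVert\rr_n\rr_n^\top\rVert_F^2=\sum_{c,c'}\rho_{nc}^2\rho_{nc'}^2=\lVert\rr_n\rVert_2^4 .
\]
When $\rr_n=0$ all quantities vanish, so the identities still hold.

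No step is a real obstacle. The only point requiring care is to state explicitly that the $p_c$ do not depend on $\boldsymbol{\theta}_n$. This holds for a node considered in isolation, with its children's parameters held fixed. In a general DAG, a child's subcircuit could share node $n$ only if there were a cycle, which acyclicity rules out, so the $p_c$ are indeed constant in $\boldsymbol{\theta}_n$.
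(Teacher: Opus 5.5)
Your proposal is correct and follows essentially the same route as the paper: differentiate $\ell_n=-\log\sum_c\theta_{nc}p_c(\mathbf{x})$ directly to obtain $-\rr_n$ and $\rr_n\rr_n^\top$, then read off positive semidefiniteness, rank one, and the equality of trace, spectral norm and Frobenius norm from the outer-product structure. Your vectorized chain-rule step is, if anything, cleaner than the paper's entrywise computation. The paper's intermediate line gets the sign of $\partial(p_c/p_n)/\partial\theta_{nc'}$ wrong, although it still arrives at the same $\Hn$.
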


\begin{proof}
    $\ell_n(\boldsymbol{\theta}_n;\mathbf{x})=-\log\big(\sum_{c}\theta_{nc}p_c(\mathbf{x})\big)$,\ $\rho_{nc}(\mathbf{x})=p_c(\mathbf{x})/p_n(\mathbf{x})$,
    $p_n(\mathbf{x}) = \sum_{c}\theta_{nc}p_c(\mathbf{x})$\\
    First order derivative
    \begin{equation}
        \frac{\partial \ell_n(\boldsymbol{\theta}_n;\mathbf{x})}{\partial \theta_{nc}} = \frac{-\partial \log\big(\sum_{c}\theta_{nc}p_c(\mathbf{x})\big)}{\partial \theta_{nc}}
        = \frac{-1}{p_n(\mathbf{x})} \frac{\partial \sum_{c}\theta_{nc}p_c(\mathbf{x})}{\partial \theta_{nc}}
    \end{equation}
    \begin{equation}
         \frac{\partial \ell_n(\boldsymbol{\theta}_n;\mathbf{x})}{\partial \theta_{nc}} = -\left(\frac{p_c(\mathbf{x})}{p_n(\mathbf{x})}\right)
    \end{equation}
    \begin{equation}
        \implies \nabla_{\boldsymbol{\theta}_n}\ell_n = -\rr_n
    \end{equation}

Second order derivative
\begin{equation}
    \frac{\partial^2 \ell_n(\boldsymbol{\theta}_n;\mathbf{x})}{\partial \theta_{nc'} \theta_{nc}} = -\frac{\partial \left(p_c(\mathbf{x})/p_n(\mathbf{x})\right)}{\partial \theta_{nc'}} = -\left[ \frac{p_n(\mathbf{x}) \ p_c(\mathbf{x})' - p_n(\mathbf{x})' \ p_c(\mathbf{x})}{p_n(\mathbf{x})^2}\right]
\end{equation}
As $p_c(\mathbf{x})$ is independent of $\theta_{nc'}$ we get
\begin{equation}
    \frac{\partial \left(p_c(\mathbf{x})/p_n(\mathbf{x})\right)}{\partial \theta_{nc'}} = -\left[ \frac{ - p_{c'}(\mathbf{x}) \ p_c(\mathbf{x})}{p_n(\mathbf{x})^2}\right] =  \frac{  p_{c'}(\mathbf{x})} {p_n(\mathbf{x})} \ \frac{p_c(\mathbf{x})}{p_n(\mathbf{x})} =\rho_{nc'} \ \rho_{nc}
   \end{equation}

Thus $\Hn=\rr_n\rr_n^\top$. A nonzero outer product has rank one and unique nonzero eigenvalue
$\|\rr_n\|_2^2$. For a rank-one positive semidefinite matrix, the trace, spectral norm, and
Frobenius norm all equal this eigenvalue.
Trace of local Hessian
\begin{equation}
     \Tr(\Hn(\mathbf{x})) = \Tr(\rr_n\rr_n^\top) = \sum_c \left(\frac{p_c(\mathbf{x})}{p_n(\mathbf{x})}\right)^2 = \tn(\mathbf{x})
\end{equation}
\end{proof}

\begin{proposition}

For any two sum nodes $i,j$, $T_i>T_j \iff t_i/t_j > (F_j/F_i)^2$. In particular, a locally less curved node $i$ with $t_i<t_j$ outranks a locally sharper node $j$ whenever $F_i/F_j > \sqrt{t_j/t_i}$.
\end{proposition}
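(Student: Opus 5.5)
The plan is to reduce both claims to Theorem~\ref{thm:decomp}, which gives $T_i=F_i^2t_i$ and $T_j=F_j^2t_j$ for the same input $\mathbf{x}$, or for the empirical averages if one prefers to read the statement at the population level. The argument then comes down to dividing a strict inequality between positive quantities.

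First I would fix the positivity assumptions that make division legitimate. The outgoing weights are strictly positive, as in Theorem~\ref{thm:decomp}, and I would assume $p_n(\mathbf{x})>0$, so that every ratio $\rho_{nc}$ is defined. Then $t_n=\lVert\rr_n\rVert_2^2>0$, because $\sum_c\theta_{nc}\rho_{nc}=1$ forces some $\rho_{nc}>0$. I also need $F_i,F_j>0$. Under Lemma~\ref{lemma:propagation}, $F_n$ is a sum of products of responsibilities, so it is positive as soon as the child outputs along at least one root-to-$n$ path are positive. I will state this as a standing nondegeneracy assumption, namely that both nodes receive nonzero flow, since otherwise the ratio $F_j/F_i$ is undefined.

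The main step is the chain of equivalences
\[
T_i>T_j \iff F_i^2t_i>F_j^2t_j \iff \frac{t_i}{t_j}>\frac{F_j^2}{F_i^2},
\]
where the second equivalence divides both sides by the positive quantity $F_i^2t_j$ and therefore preserves the direction of the inequality. This proves the first claim.

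For the ``in particular'' claim, I take $t_i<t_j$ and $F_i/F_j>\sqrt{t_j/t_i}$. Both sides of the second inequality are positive, so squaring preserves it and gives $(F_i/F_j)^2>t_j/t_i$. Taking reciprocals of these positive numbers reverses the inequality, giving $t_i/t_j>(F_j/F_i)^2$, and the first claim then yields $T_i>T_j$. The condition $t_i<t_j$ is not used in the implication itself; it only explains why the outcome counts as a reversal. I would also note that the implication is in fact an equivalence.

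I do not expect a genuine obstacle here. The only care needed is bookkeeping: stating the positivity conditions on $F$ and $t$ explicitly so that the divisions are valid, and noting that if the statement is read for the empirical quantities $\Th,\that$, it must be restricted to the per-sample or single-input setting. The reason is that $\Eh[F_n^2t_n]\neq\Eh[F_n^2]\,\Eh[t_n]$ in general, so the ratio form does not transfer to averages.
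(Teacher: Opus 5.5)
Your proposal is correct and matches the paper's proof: substitute $T_i=F_i^2t_i$ and $T_j=F_j^2t_j$ from Theorem~\ref{thm:decomp} and divide. Your explicit positivity conditions (the paper divides by $F_i^2t_j$ without stating them) and your derivation of the ``in particular'' clause fill in details that the paper's three-line proof leaves implicit.
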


\begin{proof}
    $$T_i = F_i^2 \ t_i, T_j = F_j^2 \ t_j $$
    $$ T_i > T_j \iff F_i^2 \ t_i > F_j^2 \ t_j $$
    $$ T_i > T_j  \iff \frac{t_i}{t_j} > \left(\frac{F_j}{F_i}\right)^2$$
\end{proof}

\begin{lemma}

For a tree-structured PC, the node flow is the product of routing responsibilities along the
unique root-to-$n$ path $\pi(r,n)$, taken over its \emph{sum} edges ($E_{sum}$) only:
$\Fn(\mathbf{x}) = \prod_{e\in\pi(r,n)\cap E_{\mathrm{sum}}} r_e(\mathbf{x})$, with
$r_{(m,n')}=\theta_{mn'}\,p_{n'}/p_m$. For a DAG,
$\Fn(\mathbf{x})=\sum_{\pi\in\Pi(r,n)}\prod_{e\in\pi\cap E_{\mathrm{sum}}} r_e(\mathbf{x})$,
summing over all root-to-$n$ paths $\Pi(r,n)$.
\end{lemma}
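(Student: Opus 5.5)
I will prove the lemma by unrolling the flow recursion of Definition~\ref{def:app-circuitflow}, treating the DAG statement as the general case and recovering the tree statement as the special case in which $\Pi(r,n)$ consists of a single path. The main tool is induction over a topological order of the circuit, moving from the root downward. For each node $n$ I define the path-sum
\[
\Phi_n(\x)=\sum_{\pi\in\Pi(r,n)}\prod_{e\in\pi\cap E_{\mathrm{sum}}} r_e(\x).
\]
The goal is to show $F_n(\x)=\Phi_n(\x)$ for every node. For the base case, the root has exactly one root-to-root path, namely the empty path. The product over its (empty) set of sum edges is $1$, so $\Phi_r=1=F_r$.

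For the inductive step, fix a non-root node $n$ and assume $F_m=\Phi_m$ for every parent $m\in\pa(n)$. This is legitimate because parents precede $n$ in the topological order. The key combinatorial observation is that every root-to-$n$ path ends with a unique last edge $(m,n)$ for some $m\in\pa(n)$. Consequently, $\Pi(r,n)$ is the disjoint union over $m\in\pa(n)$ of the sets $\{\pi'\cdot(m,n):\pi'\in\Pi(r,m)\}$. Appending the edge $(m,n)$ affects the product of responsibilities in one of two ways:
\begin{itemize}
\item If $m$ is a sum node, $(m,n)\in E_{\mathrm{sum}}$, and the product gains the factor $r_{mn}=\theta_{mn}p_n/p_m$.
\item If $m$ is a product node, the edge is not a sum edge, and the product is unchanged.
\end{itemize}
Summing over these disjoint pieces gives
\[
\Phi_n=\sum_{m\in\pa(n),\,m\text{ product}}\Phi_m+\sum_{m\in\pa(n),\,m\text{ sum}} r_{mn}\Phi_m.
\]
Applying the inductive hypothesis $\Phi_m=F_m$ turns this into exactly the flow recursion of Section~\ref{sec:flows}, so $\Phi_n=F_n$.

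I expect the main obstacle to be reconciling the two forms of the recursion stated in the paper. The main-text version branches on the type of each parent. The appendix Definition~\ref{def:app-circuitflow} instead branches on the type of the child $n$ and relies on the alternating-layer assumption. I would handle this by first remarking that, under alternation, all parents of a sum node are product nodes and all parents of a product or input node are sum nodes. With that remark, the two recursions coincide term by term, and the case split above matches either form. A further subtle point is that the responsibility $r_e$ must be evaluated with the same $\theta$ and the same $p$ values as those used in the flow. Since both are defined from the same upward pass, this holds automatically.

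The tree case then follows immediately. In a tree, each node has at most one parent, so $|\Pi(r,n)|=1$, and the sum over paths collapses to a single product over the sum edges of the unique path $\pi(r,n)$. As an optional sanity check, the derivative characterization $F_n=\partial\log p_r/\partial\log p_n$ gives the same answer through the chain rule. Along a product edge the local log-derivative is $\partial\log p_m/\partial\log p_n=1$, and along a sum edge it is $\theta_{mn}p_n/p_m=r_{mn}$. Summing these chain-rule contributions over all paths reproduces the same path-sum.
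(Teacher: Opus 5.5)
Your proof is correct and follows essentially the same argument as the paper: induction from the root downward, partitioning $\Pi(r,n)$ by the last edge into $n$, with a sum edge contributing the factor $r_{mn}$ and a product edge leaving the product unchanged. Your differences are refinements rather than a new route. Inducting along a topological order avoids the paper's assumption that every parent lies at depth exactly $d-1$. Splitting by parent type removes the need for alternation, and you get the tree case as the single-path specialization instead of a separate induction.
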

\begin{proof}
We proceed by induction on the depth $d$ of node $n$ in the tree-structured PC.
 
\paragraph{Base case ($d=0$).}
Node $n$ is the root $r$.  By definition, $F_r(\mathbf{x}) = 1$.
The product over an empty index set (no edges on the root-to-root path) is also $1$,
so the claim holds trivially. 
 
\paragraph{Inductive step.}
Assume the claim holds for every node at depth strictly less than $d$.
Let $n$ be a node at depth $d$, and let $m = \mathrm{pa}(n)$ denote its unique parent
( PC is tree-structured).
We consider two cases according to the type of the edge $(m,n)$.

\paragraph{Case 1: $(m,n)$ is a product edge} (i.e.\ $m$ is a product node).
 
By the flow recursion, product nodes propagate their flow without weighting:
\begin{equation*}
    F_n(\mathbf{x}) \;=\; F_m(\mathbf{x}).
\end{equation*}
Since $m$ is at depth $d-1$, the inductive hypothesis gives
\begin{equation*}
    F_m(\mathbf{x}) \;=\;
    \prod_{e\,\in\,\pi(r,m)\,\cap\, E_{\mathrm{sum}}} r_e(\mathbf{x}).
\end{equation*}
Because $(m,n)\notin E_{\mathrm{sum}}$,
the sum-edge sets along the two paths coincide:
$\pi(r,n)\cap E_{\mathrm{sum}} = \pi(r,m)\cap E_{\mathrm{sum}}$.
Therefore
\begin{equation}
    F_n(\mathbf{x}) \ = \
    \prod_{e\,\in\,\pi(r,n)\,\cap\, E_{\mathrm{sum}}} r_e(\mathbf{x}).
\end{equation}

\noindent\textbf{Case 2: $(m,n)$ is a sum edge} (i.e.\ $m$ is a sum node).
 
By the flow recursion and the definition of routing responsibility
$r_{(m,n)}(\mathbf{x}) = \theta_{mn}\,{p_n(\mathbf{x})}/{p_m(\mathbf{x})}$:
\begin{equation*}
    F_n(\mathbf{x})
    \;=\;
    \theta_{mn}\,\frac{p_n(\mathbf{x})}{p_m(\mathbf{x})}\,F_m(\mathbf{x})
    \;=\;
    r_{(m,n)}(\mathbf{x})\,F_m(\mathbf{x}).
\end{equation*}
Since $m$ is at depth $d-1$, the inductive hypothesis gives
\begin{equation*}
    F_m(\mathbf{x})
    \;=\;
    \prod_{e\,\in\,\pi(r,m)\,\cap\, E_{\mathrm{sum}}} r_e(\mathbf{x}).
\end{equation*}
Because $(m,n)\in E_{\mathrm{sum}}$ and $\pi(r,n) = \pi(r,m)\cup\{(m,n)\}$,
 the new sum-edge factor yields
\begin{align*}
    F_n(\mathbf{x})
    &\;=\;
    r_{(m,n)}(\mathbf{x})
    \cdot
    \prod_{e\,\in\,\pi(r,m)\,\cap\, E_{\mathrm{sum}}} r_e(\mathbf{x}) \\[4pt]
    &\;=\;
    \prod_{e\,\in\,\pi(r,n)\,\cap\, E_{\mathrm{sum}}} r_e(\mathbf{x}).
\end{align*}

Both cases establish the inductive step, so by strong induction the formula
\begin{equation*}
    F_n(\mathbf{x})
    \;=\;
    \prod_{e\,\in\,\pi(r,n)\,\cap\, E_{\mathrm{sum}}} r_e(\mathbf{x})
\end{equation*}
holds for every node $n$ in the tree-structured PC.

For general DAG structure

\textbf{Base case ($d = 0$).}\enspace
Node $n$ is the root $r$.
By definition $F_r(\mathbf{x}) = 1$.
The only root-to-root path is the empty path $\pi_\varnothing$;
its edge product equals $1$ (empty product), and the sum over
this single path equals $1$. Hence the claim holds. 
 
\medskip
\noindent\textbf{Inductive step.}\enspace
Assume the claim holds for every node at depth strictly less than
$d \geq 1$.
Let $n$ be a node at depth $d$ with parent set
$\mathrm{pa}(n)=\{m_1,\ldots,m_k\}$, each parent at depth $d-1$.
By the inductive hypothesis, for every $m_i\in\mathrm{pa}(n)$:
\begin{equation}
    F_{m_i}(\mathbf{x})
    \;=\;
    \sum_{\pi\,\in\,\Pi(r,\,m_i)}
    \prod_{e\,\in\,\pi\cap E_{\mathrm{sum}}} r_e(\mathbf{x}).
    \label{eq:IH}
\end{equation}
We consider two cases according to the type of $n$.
 
\medskip
\noindent\textbf{Case~1: $n$ is a sum node.}
 
By the alternating-layer property every parent $m_i$ is a product node,
so every edge $(m_i,n)\notin E_{\mathrm{sum}}$.
The flow recursion for a sum node gives
\begin{equation*}
    F_n(\mathbf{x})
    \;=\;
    \sum_{m_i\,\in\,\mathrm{pa}(n)} F_{m_i}(\mathbf{x}).
\end{equation*}
Substituting~\eqref{eq:IH}:
\begin{equation*}
    F_n(\mathbf{x})
    \;=\;
    \sum_{m_i\,\in\,\mathrm{pa}(n)}
    \sum_{\pi\,\in\,\Pi(r,\,m_i)}
    \prod_{e\,\in\,\pi\cap E_{\mathrm{sum}}} r_e(\mathbf{x}).
\end{equation*}
Every root-to-$n$ path $\pi\in\Pi(r,n)$ passes through exactly one
parent $m_i\in\mathrm{pa}(n)$ and then traverses the product edge
$(m_i,n)$.
Because $(m_i,n)\notin E_{\mathrm{sum}}$, adding this edge does not
change the set of sum edges on the path:
\begin{equation*}
    \pi\cap E_{\mathrm{sum}}
    \;=\;
    \pi'\cap E_{\mathrm{sum}},
    \qquad
    \text{where }
    \pi'=\pi(r,m_i)
    \text{ is the prefix of }\pi\text{ up to }m_i.
\end{equation*}
Hence the double sum exactly enumerates all root-to-$n$ paths:
\begin{equation*}
    F_n(\mathbf{x})
    \;=\;
    \sum_{\pi\,\in\,\Pi(r,n)}
    \prod_{e\,\in\,\pi\cap E_{\mathrm{sum}}} r_e(\mathbf{x}).
\end{equation*}
 
\medskip
\noindent\textbf{Case~2: $n$ is a product or input node.}
 
By the alternating-layer property every parent $m_i$ is a sum node,
so every edge $(m_i,n)\in E_{\mathrm{sum}}$ carries routing responsibility
$r_{(m_i,n)}(\mathbf{x})=\theta_{m_in}\,p_n(\mathbf{x})/p_{m_i}(\mathbf{x})$.
The flow recursion reads
\begin{equation*}
    F_n(\mathbf{x})
    \;=\;
    \sum_{m_i\,\in\,\mathrm{pa}(n)}
    r_{(m_i,n)}(\mathbf{x})\,F_{m_i}(\mathbf{x}).
\end{equation*}
Substituting~\eqref{eq:IH}:
\begin{equation*}
    F_n(\mathbf{x})
    \;=\;
    \sum_{m_i\,\in\,\mathrm{pa}(n)}
    r_{(m_i,n)}(\mathbf{x})
    \sum_{\pi'\,\in\,\Pi(r,\,m_i)}
    \prod_{e\,\in\,\pi'\cap E_{\mathrm{sum}}} r_e(\mathbf{x}).
\end{equation*}
Every root-to-$n$ path $\pi\in\Pi(r,n)$ decomposes uniquely as
a prefix $\pi'\in\Pi(r,m_i)$ for exactly one $m_i\in\mathrm{pa}(n)$
followed by the sum edge $(m_i,n)$.
Since $(m_i,n)\in E_{\mathrm{sum}}$, appending this edge extends the
edge product by exactly one factor:
\begin{equation*}
    \prod_{e\,\in\,\pi\cap E_{\mathrm{sum}}} r_e(\mathbf{x})
    \;=\;
    r_{(m_i,n)}(\mathbf{x})
    \cdot
    \prod_{e\,\in\,\pi'\cap E_{\mathrm{sum}}} r_e(\mathbf{x}).
\end{equation*}
Re-indexing the double sum over all root-to-$n$ paths therefore gives
\begin{align*}
    F_n(\mathbf{x})
    &\;=\;
    \sum_{m_i\,\in\,\mathrm{pa}(n)}
    \;\sum_{\pi'\,\in\,\Pi(r,\,m_i)}
    r_{(m_i,n)}(\mathbf{x})
    \prod_{e\,\in\,\pi'\cap E_{\mathrm{sum}}} r_e(\mathbf{x}) \\[6pt]
    &\;=\;
    \sum_{\pi\,\in\,\Pi(r,n)}
    \prod_{e\,\in\,\pi\cap E_{\mathrm{sum}}} r_e(\mathbf{x}).
\end{align*}
 
\medskip
\noindent\textbf{Conclusion.}\enspace
Both cases establish the inductive step.
By strong induction,
\begin{equation*}
    F_n(\mathbf{x})
    \;=\;
    \sum_{\pi\,\in\,\Pi(r,n)}
    \prod_{e\,\in\,\pi\cap E_{\mathrm{sum}}} r_e(\mathbf{x})
\end{equation*}
\end{proof}

\begin{corollary}

Consider a tree-structured PC. If every sum-edge responsibility on the root-to-$n$ path satisfies $r_e(\mathbf{x})\le\rho<1$, then
$
\Fn(\mathbf{x})\le\rho^{d_\Sigma(n)},
$
and
$
\Tnn(\mathbf{x})\le\rho^{2d_\Sigma(n)}\tn(\mathbf{x}),
$
where $d_\Sigma(n)$ is the number of upstream sum edges.
\end{corollary}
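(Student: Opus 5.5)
The plan is to combine the tree case of Lemma~\ref{lemma:propagation} with the exact factorization of Theorem~\ref{thm:decomp}. In a tree-structured PC, Lemma~\ref{lemma:propagation} gives
\[
F_n(\mathbf{x})=\prod_{e\in\pi(r,n)\cap E_{\mathrm{sum}}} r_e(\mathbf{x}),
\]
a product of exactly $d_\Sigma(n)=|\pi(r,n)\cap E_{\mathrm{sum}}|$ factors. The hypothesis says each factor satisfies $r_e(\mathbf{x})\le\rho$.

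The first step is to note that every factor is nonnegative. Indeed, $r_e=\theta_{mn'}\,p_{n'}/p_m$, with $\theta_{mn'}\ge 0$, densities nonnegative, and $p_m>0$ wherever the responsibilities are defined. So each factor lies in $[0,\rho]$.

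The second step bounds the product termwise. Multiplying $d_\Sigma(n)$ numbers in $[0,\rho]$ gives a value in $[0,\rho^{d_\Sigma(n)}]$, which is a short induction on the number of factors. This yields $0\le F_n(\mathbf{x})\le\rho^{d_\Sigma(n)}$. If $d_\Sigma(n)=0$, the empty product is $1=\rho^0$, so the bound still holds.

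The third step squares the bound, which is legitimate because $F_n\ge 0$: $F_n(\mathbf{x})^2\le\rho^{2d_\Sigma(n)}$. By Theorem~\ref{thm:decomp}, $T_n(\mathbf{x})=F_n(\mathbf{x})^2\,t_n(\mathbf{x})$, and $t_n(\mathbf{x})=\sum_c\rho_{nc}^2\ge 0$. Multiplying the squared bound by $t_n$ therefore gives $T_n(\mathbf{x})\le\rho^{2d_\Sigma(n)}t_n(\mathbf{x})$.

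There is no real obstacle here. The only care needed is the nonnegativity of $F_n$ and $t_n$, since monotonicity of squaring and of multiplying inequalities depends on it. One should also check that Theorem~\ref{thm:decomp}'s positivity assumption on the weights at $n$ holds, so that $T_n$ is well defined. The condition $\rho<1$ is not needed for the inequality itself; it only makes the bound decay geometrically.
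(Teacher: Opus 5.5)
Your proof is correct and follows the paper's argument essentially verbatim: you use the tree case of Lemma~\ref{lemma:propagation} to write $F_n(\mathbf{x})$ as a product of $d_\Sigma(n)$ responsibilities, each at most $\rho$, and then apply $T_n = F_n^2\, t_n$ from Theorem~\ref{thm:decomp} with $t_n \ge 0$. Your explicit check that each $r_e(\mathbf{x}) \ge 0$, which is needed both to bound the product termwise and to square the inequality, is a small rigor point that the paper leaves implicit.
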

\begin{proof}

\medskip
\noindent\textbf{Part 1: Bound on $F_n(\mathbf{x})$.}
 
By the Lemma \ref{lemma:propagation}, the node flow at $n$
in a tree-structured PC is the product of routing responsibilities over
all sum edges on the unique root-to-$n$ path:
\[
  F_n(\mathbf{x})
  \;=\;
  \prod_{e\,\in\,\pi(r,n)\,\cap\,E_{\mathrm{sum}}} r_e(\mathbf{x}).
\]
This product contains exactly $d_\Sigma(n)$ factors.
By hypothesis, each factor satisfies $r_e(\mathbf{x})\le\rho<1$, so:
\[
  F_n(\mathbf{x})
  \;=\;
  \prod_{e\,\in\,\pi(r,n)\,\cap\,E_{\mathrm{sum}}} r_e(\mathbf{x})
  \;\le\;
  \rho^{\,d_\Sigma(n)}.
\]

\medskip
\noindent\textbf{Part 2: Bound on $T_n(\mathbf{x})$.}
 
By the decomposition in Theorem \ref{thm:decomp}:
\[
  T_n(\mathbf{x}) \;=\; F_n(\mathbf{x})^2 \cdot t_n(\mathbf{x}).
\]
Since $t_n(\mathbf{x})\ge 0$ (it is a sum of squares), we may apply the 
flow bound directly:
\[
  T_n(\mathbf{x})
  \;=\;
  F_n(\mathbf{x})^2 \cdot t_n(\mathbf{x})
  \;\le\;
  \left(\rho^{\,d_\Sigma(n)}\right)^{\!2}\cdot t_n(\mathbf{x})
  \;=\;
  \rho^{\,2d_\Sigma(n)}\,t_n(\mathbf{x}).
\]
\end{proof}
 
\subsection{Adaptive Gated Regularization}
\begin{proposition}[]

For fixed gates $\{\omega_n\}_{n\in\Sset}$, under the surrogate used by global trace-regularized EM, the update for each outgoing weight of sum node $n$ satisfies
$
\lambda_n\theta_{nc}^2
-
\Nnc\theta_{nc}
-
\mu\omega_n\Nnc
=
0,
$
and its positive solution is given by
\begin{equation}
\theta_{nc}
=
\frac{
\Nnc+
\sqrt{\Nnc^2+4\lambda_n\mu\omega_n\Nnc}
}{
2\lambda_n
},
\end{equation}
\end{proposition}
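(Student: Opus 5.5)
The plan is to reduce the gated problem to the same algebra that yields the global update in Eq.~\eqref{eq:emupdate}, with $\mu$ replaced by the node-specific strength $\mu_n=\mu\omega_n$. The first step is to make the surrogate explicit. Reverse-engineering Eq.~\eqref{eq:emupdate}, which is the positive root of $\lambda\theta^2-\Nnc\theta-\mu\Nnc=0$, suggests the following reading. The surrogate of \cite{SureshAAAI2026} replaces the exact per-node penalty $\sum_c \Snc/\theta_{nc}^2$, whose stationarity condition is cubic, by a term whose derivative in $\theta_{nc}$ is $-\Nnc/\theta_{nc}^2$. The natural candidate is $\sum_c \Nnc/\theta_{nc}$, obtained by bounding the edge-flow second moment through the first moment ($\Fnc\le\theta_{nc}\rho_{nc}\Fn$, so $\Snc/\theta_{nc}^2$ is controlled by a term of order $\Nnc/\theta_{nc}$). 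I will take this as the surrogate and check that it reproduces Eq.~\eqref{eq:emupdate} when $\omega_n=1$.

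Next, with the gates held fixed, the gated regularizer $R_{\boldsymbol\omega}$ is a sum over sum nodes. Both the EM lower bound $\sum_c \Nnc\log\theta_{nc}$ and the simplex constraints also separate across nodes. So the M-step splits into independent problems, one per sum node $n$:
\[
\max_{\boldsymbol\theta_n}\ \sum_c \Nnc\log\theta_{nc}-\mu\omega_n\sum_c \frac{\Nnc}{\theta_{nc}}\quad\text{s.t. }\sum_c\theta_{nc}=1 .
\]
I will attach a node-specific multiplier $\lambda_n$ and set the partial derivatives of the Lagrangian to zero. This gives $\Nnc/\theta_{nc}+\mu\omega_n\Nnc/\theta_{nc}^2-\lambda_n=0$. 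Multiplying by $\theta_{nc}^2>0$ gives the claimed quadratic $\lambda_n\theta_{nc}^2-\Nnc\theta_{nc}-\mu\omega_n\Nnc=0$.

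To select the root, note that the product of the two roots is $-\mu\omega_n\Nnc/\lambda_n\le 0$. This holds because $\omega_n\in[0,1]$, $\mu\ge0$, $\Nnc\ge0$, and $\lambda_n>0$. Positivity of $\lambda_n$ follows by summing the stationarity equations weighted by $\theta_{nc}$ together with the constraint, which gives $\lambda_n=\sum_c\Nnc+\mu\omega_n\sum_c\Nnc/\theta_{nc}>0$ whenever node $n$ receives any flow. Hence exactly one root is nonnegative, namely the "$+$" root, which is Eq.~\eqref{eq:gatedupdate}.

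The step I expect to be the main obstacle is stating precisely what "the surrogate used by global trace-regularized EM" is, since the paper only cites it. I will fix it as the $\Nnc/\theta_{nc}$ replacement above and justify it by matching the known global update. For completeness, I would also remark that $\lambda_n$ is then determined by normalization. The right-hand side of Eq.~\eqref{eq:gatedupdate} is strictly decreasing in $\lambda_n$, so $\sum_c\theta_{nc}(\lambda_n)=1$ has a unique solution that can be found by one-dimensional search. The limiting cases $\omega_n=1$ (global regularizer) and $\omega_n\to0$, which gives $\theta_{nc}=\Nnc/\lambda_n$ (standard EM), follow by substitution.
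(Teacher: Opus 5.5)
Your proposal is correct and follows essentially the same route as the paper. The paper's proof also uses the surrogate $\omega_n\sum_{c}\Nnc/\theta_{nc}$, imposed as a constraint with multiplier $\mu$; its Lagrangian shows a squared term by typo, but the derivative it computes is that of your unsquared surrogate. It sets $\Nnc/\theta_{nc}-\lambda+\mu\omega_n\Nnc/\theta_{nc}^2=0$ and clears denominators to reach the same quadratic and its positive root.

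Your extras are sound additions the paper omits: the node-wise separation, the identity $\lambda_n=\sum_c\Nnc+\mu\omega_n\sum_c\Nnc/\theta_{nc}>0$, and the sign of the root product. However, drop the heuristic that $\Snc/\theta_{nc}^2$ is controlled by a term of order $\Nnc/\theta_{nc}$. The relation $\Fnc=\theta_{nc}\rho_{nc}\Fn$ is an identity, not an inequality, and $\Fnc^2/\theta_{nc}^2$ exceeds $\Fnc/\theta_{nc}$ by the factor $\Fnc/\theta_{nc}$, which can be large. Justify the surrogate only by matching Eq.~\eqref{eq:emupdate}, as you also propose.
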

\begin{proof}
    Adding the adaptive regularizer to the EM objective of the PC we obtain

\[
\begin{aligned}
\theta^*_{n\cdot} = \arg\max_{\theta_{nc}} \quad & \sum\limits_{c \in \mathrm{ch}(n) }\Nnc(x)\: \log \,\theta_{nc} \\
\text{subject to} \quad & \sum\limits_{c \in \mathrm{ch}(n)} \theta_{nc} = 1, \\
                        & \omega_n(x) \sum\limits_{c \in ch(n)}  \left(\dfrac{\Nnc(x)}{\theta_{nc}}\right) \leq m
\end{aligned}
\]
 The Lagrangian formulation of the constrained maximization objective is 
\[\begin{aligned} \mathcal{L}(\theta_{n\cdot}, \lambda, \mu) = &\sum\limits_{c \in \mathrm{ch}(n) }\Nnc(x)\: \log \,\theta_{nc}  - \lambda \left( \sum\limits_{c \in \mathrm{ch}(n)} \theta_{nc} - 1\right) \\ & - \mu \left( \omega_n(x)\sum\limits_{c \in \mathrm{ch}(n)} \left(\dfrac{\Nnc(x)}{\theta_{nc}} \right)^2 - m \right)\end{aligned}\]  

\noindent Differentiating wrt $\theta_{nc}$ we get,
    $$\dfrac{\partial\mathcal{L}(\theta_{n\cdot}, \lambda, \mu)}{\partial \theta_{nc}} = \dfrac{\Nnc}{\theta_{nc}} - \lambda +\mu \left( \omega_n \dfrac{\Nnc}{\theta_{nc}^2}\right)$$
    equating this to $0$ yields the quadratic equation $$
   \lambda \ \theta_{nc}^2 - \Nnc \ \theta_{nc} - \mu \ \omega_n \Nnc = 0$$
    whose solution obtained by the quadratic formula is $$\theta_{nc} = \dfrac{\Nnc + \sqrt{\Nnc(x)^2 + 4 \lambda \mu \ \omega_n \Nnc}}{2 \lambda}$$

\end{proof}
\begin{algorithm}[ht]
\caption{\textbf{Gated Local Trace EM for Probabilistic Circuits}}
\label{alg:gated-local-trace-em}
\caption{\textbf{Gated Marginals EM for Probabilistic Circuits}}
\label{alg:gated-marginals-em}
\begin{algorithmic}[1]
 
\Require
  Probabilistic circuit $\PC$ with sum-node parameters
  $P=\{\theta_{nc}\}_{(n,c)\in E}$;
  dataset $D=\{x^{(i)}\}_{i=1}^{N}$;
  regularization weight $\mu\geq 0$;
  simplex constraint weight $\lambda>0$;
  smoothing factor $\alpha\in(0,1]$;
  monotone gate function $g:\R_{\geq 0}\to[0,1]$;
  number of epochs $E$
 
\Ensure
  Updated sum-node parameters $P$ minimising the gated
  marginals regularised log-likelihood
 
\State \textbf{Initialise:}
  for each sum-node $n$, set $\theta_{n\cdot}$ uniformly on its simplex
\State Set $\omega_n \leftarrow g(0)$ for all sum-nodes $n$
  \algorithmiccomment{gate weights; recomputed after first epoch}
 
\For{epoch $= 1,\ldots,E$}
  {\small\hfill\textcolor{algcomment}{// repeat until convergence or max epochs}}
 
  \Statex \quad\emph{\textbf{E-step:} Compute expected edge flows and node marginals}
  \State Run forward--backward passes on $\PC$ over $D$
  \State Obtain edge flows $\bigl\{F_{nc}(x)\bigr\}_{(n,c)\in E,\;x\in D}$
    \algorithmiccomment{$O\!\left(|P|\,|D|\right)$ time}
  \State Obtain node marginals $\bigl\{p_n(x),\,p_c(x)\bigr\}_{n\in V,\;x\in D}$
    \algorithmiccomment{available from the forward pass at no extra cost}
 
  \Statex \quad\emph{\textbf{Local Trace Computation (marginal ratio):}}
  \ForAll{sum-nodes $n$}
    \State Aggregate flows:
      $\;\bar{F}_{nc} \leftarrow \displaystyle\sum_{x\in D} F_{nc}(x)$
    \State Compute dataset-averaged marginal ratio for each child $c$:
    \[
      \bar{r}_{nc}
      \;\leftarrow\;
      \frac{1}{|D|}\sum_{x\in D}
      \frac{p_c(x)}{p_n(x)}
    \]
    \State Compute local trace via marginals:
    \[
      \hat{t}_{n}
      \;\leftarrow\;
      \sum_{c\,\in\,\ch(n)}
      \bar{r}_{nc}^{\;2}
      \;=\;
      \sum_{c\,\in\,\ch(n)}
      \left(
        \frac{1}{|D|}\sum_{x\in D}
        \frac{p_c(x)}{p_n(x)}
      \right)^{\!2}
    \]
    \algorithmiccomment{depth-agnostic: no $F_n$ factor; no $\theta_{nc}$ dependence}
  \EndFor
 
  \Statex \quad\emph{\textbf{Gate Weight Update:}}
  \ForAll{sum-nodes $n$}
    \State $\omega_{n} \leftarrow g\!\left(\hat{t}_{n}\right)$
      \algorithmiccomment{e.g.\ $g(\hat{t}_n) = \hat{t}_n/\max_{n'}\hat{t}_{n'}$\;
             (Proposed);}
  \EndFor
 
  \Statex \quad\emph{\textbf{M-step:} Gated sharpness-aware parameter update}
  \ForAll{sum-nodes $n$}
    \algorithmiccomment{updates are independent per node}
    \ForAll{child edges $(n\!\to\!c)$}
      \State
      \(
        \tilde{\theta}_{nc}
        \;\leftarrow\;
        \dfrac{
          \bar{F}_{nc}
          +
          \sqrt{
            \bar{F}_{nc}^{\;2}
            +
            4\,\lambda\,(\mu\,\omega_{n})\,\bar{F}_{nc}
          }
        }{2\,\lambda}
      \)
      \algorithmiccomment{same closed form as global trace; effective strength $\mu_n = \mu\,\omega_n$}
    \EndFor
    \State Normalise:
      $\;\tilde{\theta}_{nc}
        \leftarrow
        \tilde{\theta}_{nc}
        \big/
        \displaystyle\sum_{c'\in\ch(n)}\tilde{\theta}_{nc'}
      $\quad$\forall\, c\in\ch(n)$
      \algorithmiccomment{project onto probability simplex}
  \EndFor
 
  \State
    $\theta \leftarrow (1-\alpha)\,\theta + \alpha\,\tilde{\theta}$
    \algorithmiccomment{running-average smoothing for noisy mini-batch flows}
 
\EndFor
 
\State \Return $P \equiv \theta$
 
\end{algorithmic}
\end{algorithm}

\section{Experimental Setup and Implementation Details}
In this section, we provide the details regarding the datasets, model architecture, hyperparameter used for the experimental results reported in the main paper.

\begin{table}[ht]
\centering
\caption{Overview of the $20$ binary density estimation datasets, showing the number of variables and the number of instances in the training, validation, and test splits.}
\label{tab:20DEBD}
\begin{tabular}{lrrrr}
\toprule
\textbf{Dataset Name}       & \textbf{\#vars}     & \textbf{\#train}       & \textbf{\#valid }      & \textbf{\#test}        \\
\midrule
nltcs         & $16$       & $16181$       & $2157$        & $3236$        \\
msnbc         & $17$       & $291326$      & $38843$       & $58265$       \\
kdd           & $65$       & $180092$      & $19907$       & $34955$       \\
plants        & $69$       & $17412$       & $2321$        & $3482$        \\
baudio        & $100$      & $15000$       & $2000$        & $3000$        \\
jester        & $100$      & $9000$        & $1000$        & $4116$        \\
bnetflix      & $100$      & $15000$       & $2000$        & $3000$        \\
accidents     & $111$      & $12758$       & $1700$        & $2551$        \\
tretail       & $135$      & $22041$       & $2938$        & $4408$        \\
pumsb\_star   & $163$      & $12262$       & $1635$        & $2452$        \\
dna           & $180$      & $1600$        & $400$         & $1186$        \\
kosarek       & $190$      & $33375$       & $4450$        & $6675$        \\
msweb         & $294$      & $29441$       & $3270$        & $5000$        \\
tmovie        & $500$      & $4524$        & $1002$        & $591$         \\
book          & $500$      & $8700$        & $1159$        & $1739$        \\
cwebkb        & $839$      & $2803$        & $558$         & $838$         \\
cr52          & $889$      & $6532$        & $1028$        & $1540$        \\
c20ng         & $910$      & $11293$       & $3764$        & $3764$        \\
bbc           & $1058$     & $1670$        & $225$         & $330$         \\
ad            & $1556$     & $2461$        & $327$         & $491$         \\
\bottomrule
\end{tabular}
\end{table}

\paragraph{Real World Data.} We consider the standard suite of $20$ binary density estimation benchmark \cite{van2012markov, bekker2015tractable}. These include small to large domains such as \texttt{nltcs} ($16$ variables), up to \texttt{ad} ($1556$ variables). Table \ref{tab:20DEBD} summarizes the number of variables and data points in the train, validation, and test splits for each of the $20$ datasets.

\begin{figure}[ht]
    \centering
    \begin{subfigure}{0.45\linewidth}
        \includegraphics[width=\linewidth]{Figures/Decomposition/panel_a.pdf}
        \caption{Contextual usage $\widehat{F}_n^2$}
        \label{fig:decomp-large-a}
    \end{subfigure}
    \hfill
    \begin{subfigure}{0.45\linewidth}
        \includegraphics[width=\linewidth]{Figures/Decomposition/panel_b.pdf}
        \caption{Local curvature $\that$}
        \label{fig:decomp-large-b}
    \end{subfigure}
    \begin{subfigure}{0.45\linewidth}
        \includegraphics[width=\linewidth]{Figures/Decomposition/panel_c.pdf}
        \caption{Global contribution $\Th$}
        \label{fig:decomp-large-c}
    \end{subfigure}
    \hfill
    \begin{subfigure}{0.45\linewidth}
        \includegraphics[width=\linewidth]{Figures/Decomposition/panel_d.pdf}
        \caption{$\Th=\widehat{F}_n^2\,\that$}
        \label{fig:decomp-large-d}
    \end{subfigure}
    \caption{\textbf{The curvature of a trained probabilistic circuit factorizes into usage and local sharpness.}
    Each sum node contributes $\Tnn=\Fn^2\,\tn$ to the NLL Hessian trace. Panels~(a)-(c) show the same circuit, with identical node coordinates, colored by (a)~squared flow $\widehat{F}_n^2$, (b)~local trace $\that$, and (c)~global contribution $\Th$; panel~(d) plots $\log\widehat{F}_n^2$ against $\log\that$ colored by $\log\Th$, with dashed iso-contribution lines of constant $\Th$. The two factors are anticorrelated across the circuit: the locally sharpest nodes (dark in~b) sit at low flow (light in~a) and contribute little globally, while the high-usage root region drives the global trace despite modest local curvature. Nodes~A, B, and~C mark the three regimes: high-usage/low-local~(A), low-usage/high-local~(B), and high on both~(C). }
    \label{fig:decomp-large}
\end{figure}
\subsection{Model Architectures}

\begin{itemize}
    
    \item \noindent \textbf{PyJuice} \cite{LiuICML2024}, for our experiments on the binary density estimation datasets. We use the Hidden Chow-Liu Tree structure \citep{LiuNeurIPS2021,LiuICML2024} 
    which is a generative probabilistic model that extends the classical Chow-Liu tree by introducing latent (hidden) variables to model complex dependencies among observed variables more effectively. The tree topology is learned from data using maximum‐likelihood (Chow-Liu algorithm). The observed variables are then pushed to the leaves, introducing latent variables to occupy the internal nodes, forming a latent tree structure. As a result, the learned structure can vary across datasets, adapting to the underlying statistical relationships. The latent size (\texttt{num\_latent}) refers to the number of states each hidden variable can take, and it serves as a key hyperparameter that controls the model’s capacity.
    We set \texttt{num\_latent}$=100$ for all our binary density estimation datasets.
\end{itemize}

\section{Additional Results}
This section provides additional evidence for the empirical observations and method comparisons reported in the main paper. We first visualize the two components of the curvature decomposition and present a controlled synthetic example. We then report full-dataset results for underfitting, trace concentration, node-selection criteria, and performance across data regimes.

\subsection{Visualizing the Curvature Decomposition}
\label{app:decomp-viz}

Figure~\ref{fig:decomp-large} visualizes the empirical contextual usage, local curvature, and global trace contribution of the sum nodes in a trained circuit. In this example, several nodes with large local curvature receive little flow and therefore contribute weakly to the global trace. Conversely, nodes near the root receive substantial flow and can dominate the global contribution despite moderate local curvature.
The labeled nodes illustrate the ranking reversal characterized in Proposition~\ref{prop:reversal}. Node~A has the smallest local curvature among the three but the largest global contribution because it receives the most flow. Node~B is locally sharper but contributes less globally because its contextual usage is small. The example illustrates why global contribution and local curvature provide different signals for allocating regularization.

\subsection{A 2D Synthetic Data Distribution}
\label{app:synthetic}
We construct a two-dimensional example to illustrate how uniform and locally gated trace regularization can favor different regions of parameter space. The training set contains $50$ samples from a mixture of three isotropic Gaussians with standard deviation $\sigma=0.25$, centered at the vertices of an equilateral triangle. We additionally include eight noise samples distributed around a ring of radius $2.3$, with radial jitter $0.12$. The test set contains $800$ samples drawn only from the Gaussian mixture.
We train a PC with Gaussian input distributions and $16$ sum and input units under three regimes: unregularized maximum likelihood, global trace regularization, and adaptive local-curvature gating. All methods use the same initialization, and both regularized methods use $\mu=0.20$.
Following \citet{li2018visualizing}, we project the optimization trajectories onto the plane spanned by the leading two principal components of the parameter iterates. We then evaluate the train NLL, test NLL, and exact Hessian trace over this plane. Figure~\ref{fig:synthetic-landscape} shows that the three methods converge to different regions. The unregularized model attains the lowest train NLL but higher test NLL and curvature. Global regularization reaches a flatter region but sacrifices both train and test fit. Adaptive gating retains lower curvature while reaching a region with better test fit than uniform global regularization.

\begin{figure}[ht]
    \centering
    \includegraphics[width=\linewidth]{Figures/Introduction/three_basins_ratio_paper.pdf}
    \caption{\textbf{Vanilla, global, and adaptive regularization settle in distinct regions of parameter space.}
     Vanilla reaches the lowest train NLL but a sharp, high-curvature region that generalizes poorly. Global regularization flattens curvature indiscriminately and underfits, while adaptive gating flattens only the non-generalizing sharpness, reaching a region with competitive test NLL at markedly lower sharpness.
     }
    \label{fig:synthetic-landscape}
\end{figure}

\subsection{Underfitting at full data regime}
Figure~\ref{fig:DEBD_underfitting} reports the training and test log-likelihood trajectories of the unregularized and global trace-regularized models across the DEBD benchmarks using the full training sets. On most datasets, global regularization lowers both training and test likelihood. The resulting degradation is therefore consistent with reduced model fit rather than a larger train--test gap.

\begin{figure}[ht]
    \centering
    \includegraphics[width=0.8\linewidth]{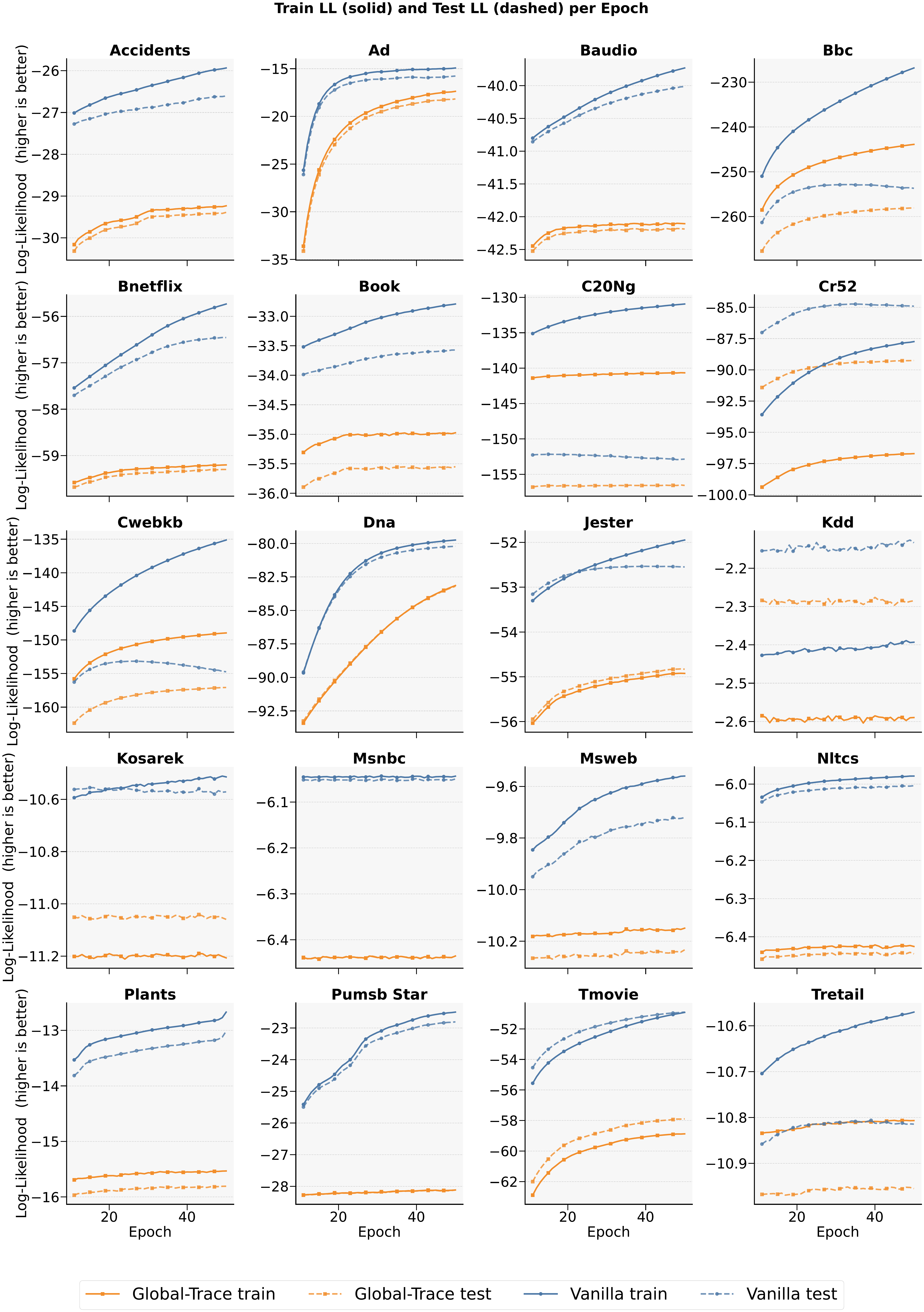}
    \caption{(Underfitting at full data regime)Training and test log-likelihood trajectories for the
vanilla and global trace-regularized models DEBD using the full training data. Global trace regularization reduces the train-test gap but lowers both training and test likelihood, indicating underfitting rather than
improved generalization.}
    \label{fig:DEBD_underfitting}
\end{figure}

\subsection{Sparsity of Global and Local Trace}
Figures~\ref{fig:Gtrace_sparsity} and~\ref{fig:Ltrace_sparsity} compare how global trace contribution and local curvature are distributed across sum nodes. The global contribution is highly concentrated across the evaluated datasets, with a small fraction of nodes accounting for most of the total trace. Local curvature is generally distributed across a broader set of nodes, indicating that contextual flow contributes to the concentration observed in the global trace.

\begin{figure}[ht]
    \centering
    \includegraphics[width=0.9\linewidth]{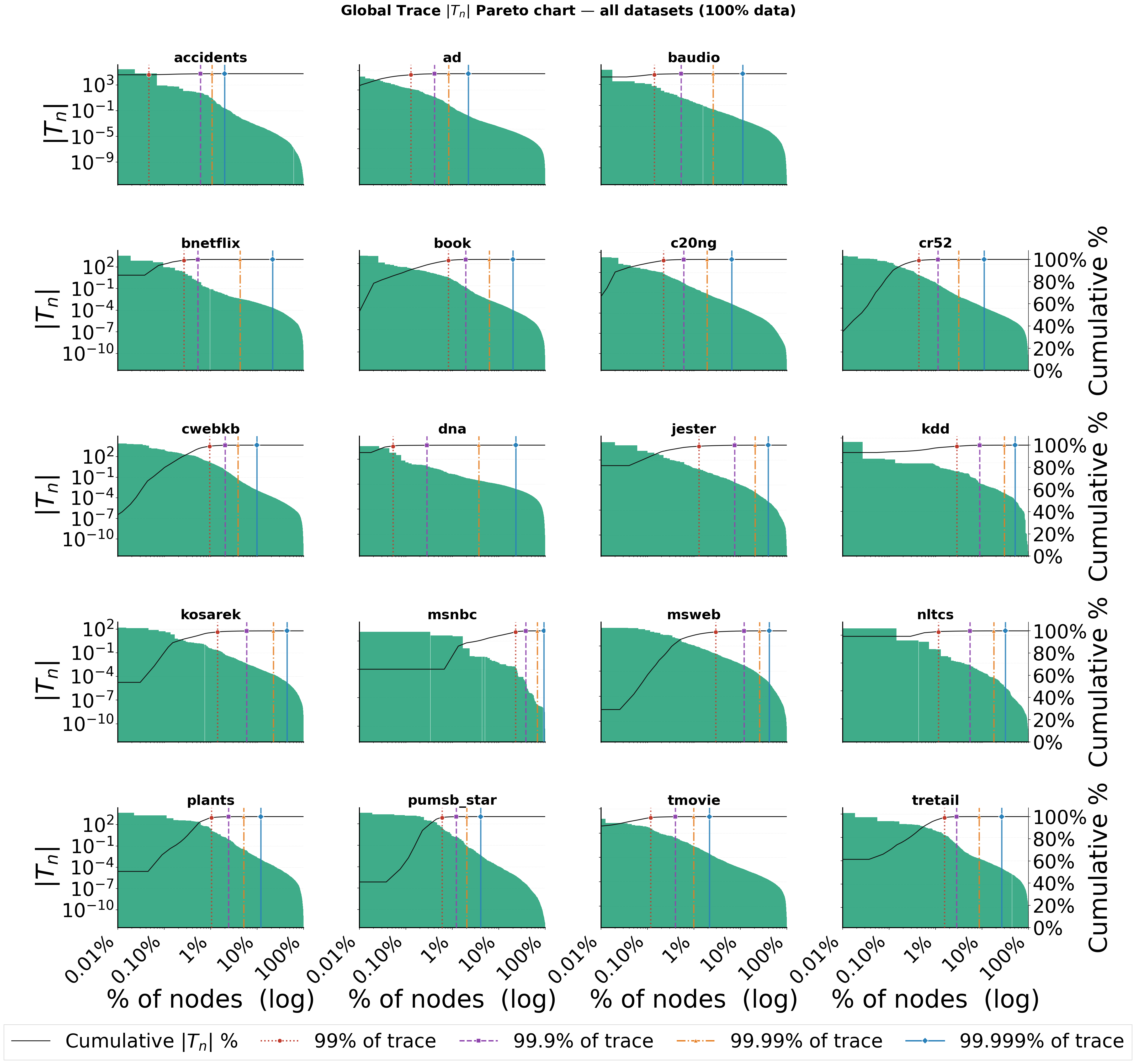}
    \caption{(Sparsity of Global trace) Cumulative share of the total trace as sum nodes are ranked by theirempirical contribution  $(\Th)$. Across datasets, the distribution is highly concentrated: a small set of nodes dominates the global sharpness signal, while most nodes contribute negligibly.}
    \label{fig:Gtrace_sparsity}
\end{figure}

\begin{figure}[ht]
    \centering
    \includegraphics[width=0.9\linewidth]{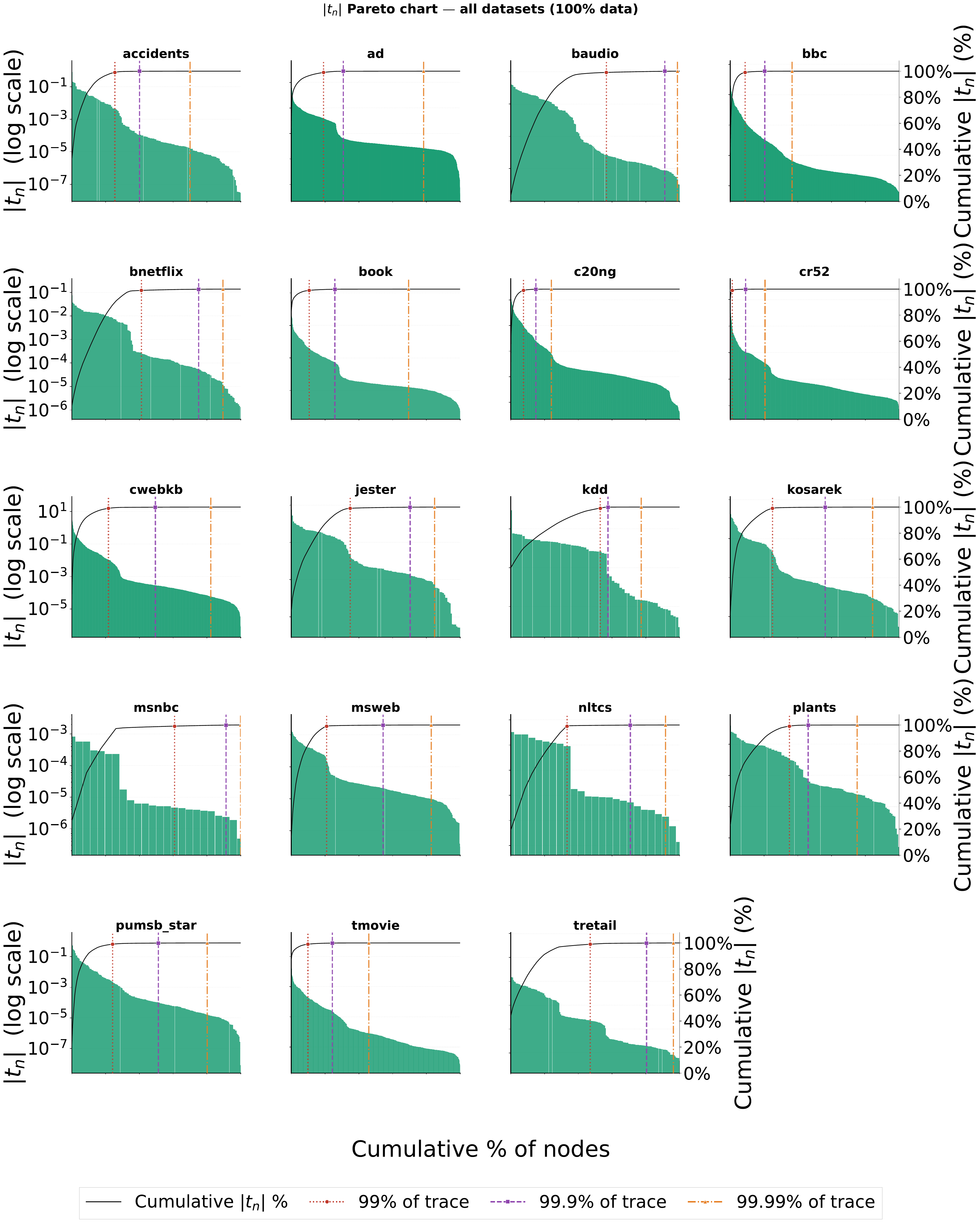}
    \caption{(Sparsity of local trace) Cumulative share of $\sum_n\that$ as nodes are ranked by $\that$. On the evaluated circuits, the
    local trace concentration is less severe than the global contribution, indicating that part of the concentration in $\Th$ is
    introduced by contextual usage.}
    \label{fig:Ltrace_sparsity}
\end{figure}

\subsection{Thresholding based node selection results}
We next compare two criteria for selecting where trace regularization acts. In Figure~\ref{fig:Gtrace_threshold}, nodes are ranked by global contribution. Restricting the penalty to progressively smaller sets of high-contribution nodes generally fails to recover the unregularized likelihood and often degrades it further. Figure~\ref{fig:Ltrace_threshold} instead ranks nodes by local curvature. Relative to global-contribution selection, local-curvature selection more consistently preserves the unregularized fit across the evaluated datasets.

\begin{figure}[ht]
    \centering
    \includegraphics[width=0.9\linewidth]{Figures/Supplementary/test_ll_dot_all_datasets_5x4.pdf}
    \caption{(Thresholding based node selection results) Change in test log-likelihood relative to the unregularized model when trace regularization is applied to all sum nodes or restricted to progressively smaller fractions ranked by $\widehat{T}_n$. Selecting fewer top-ranked nodes generally worsens performance, showing that global contribution is a poor criterion for allocating regularization.}
    \label{fig:Gtrace_threshold}
\end{figure}

\begin{figure}[ht]
    \centering
    \includegraphics[width=0.8\linewidth]{Figures/Supplementary/test_ll_local_dot_all_datasets_5x4.pdf}
    \caption{(Thresholding based node selection results) Change in test log-likelihood relative to the unregularized model when trace regularization is applied to all sum nodes or restricted to progressively smaller fractions ranked by $\that$. Selecting fewer top-ranked nodes to regularize  generally improves the performance, showing that $\that$ is a better criterion for allocating regularization.}
    \label{fig:Ltrace_threshold}
\end{figure}

\subsection{Low Data Regime Experiments}
\begin{table*}[h]
\small
\centering
\caption{Test log-likelihood comparison across data fractions for DEBD. For gated methods, the variant is selected per dataset/fraction by highest mean validation LL.}

\setlength{\tabcolsep}{1pt}   
\renewcommand{\arraystretch}{1}

\begin{tabular}{lcccccccccccc}
\toprule
Dataset &  \multicolumn{3}{c}{25\% Data} & \multicolumn{3}{c}{50\% Data} & \multicolumn{3}{c}{100\% Data} \\
\cmidrule(lr){2-4}\cmidrule(lr){5-7}\cmidrule(lr){8-10}\cmidrule(lr){11-13}
 & Vanilla & Global Trace & Best Gated & Vanilla & Global Trace & Best Gated & Vanilla & Global Trace & Best Gated \\
\midrule
accidents  & -28.59 & -29.83 & -28.64 & -27.13 & -29.84 & -27.12 & -26.63 & -29.69 & -26.64 \\
ad &   -28.55 & -29.64 & -28.86 & -20.51 & -22.07 & -20.47 & -18.30 & -20.37 & -18.22 \\
baudio & -41.13 & -42.46 & -41.16 & -39.99 & -42.40 & -40.01 & -39.63 & -42.45 & -39.63 \\
bbc & -379.30 & -281.33 & -377.54 & -312.26 & -262.64 & -309.58 & -269.69 & -256.43 & -269.39 \\
bnetflix & -59.06 & -59.87 & -59.24 & -56.85 & -59.83 & -56.95 & -56.22 & -59.67 & -56.21 \\
book & -37.91 & -37.48 & -37.45 & -35.44 & -36.75 & -35.29 & -34.37 & -36.31 & -34.23 \\
c20ng & -162.01 & -160.21 & -161.97 & -154.92 & -158.67 & -154.94 & -151.97 & -157.60 & -152.04 \\
cr52 & -107.98 & -103.75 & -106.04 & -102.79 & -102.07 & -101.00 & -99.35 & -102.15 & -100.26 \\
cwebkb & -204.80 & -164.74 & -202.88 & -171.74 & -159.32 & -170.11 & -154.78 & -157.34 & -154.89 \\
dna & -93.73 & -82.61 & -93.44 & -90.35 & -81.76 & -90.07 & -87.70 & -81.35 & -87.11 \\
jester & -57.38 & -55.65 & -57.43 & -54.05 & -55.90 & -54.26 & -52.82 & -55.81 & -52.88 \\
kdd  & -2.22 & -2.37 & -2.21 & -2.22 & -2.36 & -2.21 & -2.23 & -2.37 & -2.22 \\
kosarek  & -10.83 & -11.11 & -10.79 & -10.64 & -11.06 & -10.67 & -10.59 & -11.09 & -10.59 \\
msnbc & -6.10 & -6.46 & -6.08 & -6.11 & -6.47 & -6.08 & -6.12 & -6.46 & -6.08 \\
msweb & -10.00 & -10.45 & -9.97 & -9.82 & -10.39 & -9.81 & -9.72 & -10.40 & -9.74 \\
nltcs  & -6.09 & -6.50 & -6.10 & -6.02 & -6.47 & -6.02 & -6.00 & -6.48 & -6.00 \\
plants & -12.97 & -15.90 & -12.96 & -12.77 & -15.97 & -12.77 & -12.67 & -15.98 & -12.66 \\
pumsb\_star & -23.82 & -28.75 & -23.65 & -22.79 & -28.35 & -22.77 & -22.79 & -28.35 & -22.78 \\
tmovie & -51.23 & -50.51 & -50.89 & -44.55 & -49.57 & -44.58 & -42.17 & -48.84 & -42.11 \\
tretail & -11.31 & -11.03 & -11.27 & -10.96 & -11.00 & -10.98 & -10.84 & -10.98 & -10.85 \\
\bottomrule
\end{tabular}
\label{tab:summary}
\end{table*}
Table~\ref{tab:summary} compares the unregularized model, global trace regularization, and the best gated variant across the evaluated data fractions. For each dataset and fraction, the gated variant is selected using validation log-likelihood. The results show that global regularization can improve performance for some datasets, particularly when data are limited, but frequently degrades likelihood as the available data increase. The gated variants more consistently preserve the fit while retaining gains on datasets where sharpness regularization remains beneficial.

\end{document}